\newcommand{\E}{\mathbb{E}}
\newcommand{\R}{\mathbb{R}}
\newcommand{\Rd}{\mathbb{R}^D}
\def\vs{{\mathbf{s}}}
\def\va{{\mathbf{a}}}
\def\vz{{\mathbf{z}}}
\def\vx{{\mathbf{x}}}
\def\vg{{\mathbf{g}}}
\def\vmu{{\mathbf{\mu}}}
\def\sS{{\mathcal{S}}}
\def\sA{{\mathcal{A}}}
\def\pzero{{p_0}}
\def\discount{{\gamma}}
\def\policy{{\pi}}
\def\prior{p_\vz}
\def\jacob{\mathbf{J}}
\def\argmax{\mathrm{argmax}}
\def\smooth{\tau}
\def\H{\mathcal{H}}
\def\D{\mathcal{D}}
\def\L{\mathcal{L}}
\def\Sn{\mathcal{S}_n}
\def\Sl{\mathcal{S}_l}
\def\DKL{\mathbb{D}_{\text{KL}}}
\newcommand{\abs}[1]{\left| #1 \right|}
\newcommand{\norm}[1]{\left\| #1 \right\|}
\newcommand{\of}[1]{\left( #1 \right)}
\newcommand{\ofmid}[1]{\left[ #1 \right]}
\newcommand{\dd}[2]{\frac{\partial}{\partial #1} #2}
\newtheorem{theorem}{Theorem}[section]
\newtheorem{proposition}[theorem]{Proposition}
\theoremstyle{definition}
\theoremstyle{remark}
\newtheorem{remark}[theorem]{Remark}
\title{Maximum Entropy Reinforcement Learning via \\ Energy-Based Normalizing Flow}
\author{%
  Chen-Hao Chao$^{*}$\textsuperscript{1,2}\\
  \And
  Chien Feng\thanks{Equal contribution.} \,\textsuperscript{1}\\
  \And
  Wei-Fang Sun\textsuperscript{2}\\
  \AND
  Cheng-Kuang Lee\textsuperscript{2}\\
  \And
  \hspace{0.55em}Simon See\textsuperscript{2}\\
  \And
  \hspace{0.75em}Chun-Yi Lee\thanks{Corresponding author. Email: \texttt{cylee@cs.nthu.edu.tw}} \,\textsuperscript{1}\\
  \AND
  \textmd{\textsuperscript{1} Elsa Lab, National Tsing Hua University, Hsinchu City, Taiwan}\\
  \textsuperscript{2} NVIDIA AI Technology Center, NVIDIA Corporation, Santa Clara, CA, USA\\
}
\begin{document}

\maketitle
\begin{abstract}
Existing Maximum-Entropy (MaxEnt) Reinforcement Learning (RL) methods for continuous action spaces are typically formulated based on actor-critic frameworks and optimized through alternating steps of policy evaluation and policy improvement. In the policy evaluation steps, the critic is updated to capture the soft Q-function. In the policy improvement steps, the actor is adjusted in accordance with the updated soft Q-function. In this paper, we introduce a new MaxEnt RL framework modeled using Energy-Based Normalizing Flows (EBFlow). This framework integrates the policy evaluation steps and the policy improvement steps, resulting in a single objective training process. Our method enables the calculation of the soft value function used in the policy evaluation target without Monte Carlo approximation. Moreover, this design supports the modeling of multi-modal action distributions while facilitating efficient action sampling. To evaluate the performance of our method, we conducted experiments on the MuJoCo benchmark suite and a number of high-dimensional robotic tasks simulated by Omniverse Isaac Gym. The evaluation results demonstrate that our method achieves superior performance compared to widely-adopted representative baselines.
\end{abstract}
\section{Introduction}
\label{sec:introduction}
Maximum-Entropy (MaxEnt) Reinforcement Learning (RL)~\cite{Kappen_2005, Ziebart2008irl, Toussaint2009traj, Ziebart2010modeling, Rawlik2012OnSO, Fox2016tam, ODonoghue2016PGQCP, haarnoja2017sql, haarnoja2017soft, Haarnoja2018LatentSP, mazoure2019leveraging, Ward2019ImprovingEI, zhang2023latent, messaoud2024sac, yarats2021improving, Shi2019SoftPG, eysenbach2022maximum} has emerged as a prominent method for modeling stochastic policies. Different from standard RL, MaxEnt RL integrates the entropy of policies as rewards, which leads to a balanced exploration-exploitation trade-off during training. This approach has demonstrated improved robustness both theoretically and empirically~\cite{eysenbach2022maximum, abdolmaleki2018maximum, Lee2019TsallisRL}. Building on this foundation, many studies leveraging MaxEnt RL have shown superior performance on continuous-control benchmark environments~\cite{haarnoja2017sql, haarnoja2017soft} and real-world applications~\cite{Haarnoja2018SoftAA, app12199837, costa2021nav}.

An active research domain in MaxEnt RL concentrates on the learning of the soft Q-function~\cite{haarnoja2017sql, haarnoja2017soft, Haarnoja2018LatentSP, mazoure2019leveraging, Ward2019ImprovingEI, zhang2023latent, messaoud2024sac, yarats2021improving}. 
These methods follow the paradigm introduced in soft Q-learning (SQL)~\cite{haarnoja2017sql}. They parameterize the soft Q-function as an energy-based model~\cite{LeCun2006ATO} and optimize it based on the soft Bellman error~\cite{haarnoja2017sql} calculated from rewards and the soft value function. However, this approach presents two challenges. First, sampling from an energy-based model requires a costly Monte Carlo Markov Chain (MCMC)~\cite{Roberts1996ExponentialCO, Roberts1998OptimalSO} or variational inference~\cite{Liu2016SteinVG} process, which can result in inefficient interactions with environments. Second, the calculation of the soft value function can involve computationally infeasible integration, which requires an effective approximation method. To tackle these issues, various methods~\cite{haarnoja2017sql, haarnoja2017soft, Haarnoja2018LatentSP, mazoure2019leveraging, Ward2019ImprovingEI, zhang2023latent, messaoud2024sac, yarats2021improving} were proposed, all grounded in a common design philosophy. To address the first challenge, these methods suggest operating on an actor-critic framework and optimizing it through alternating steps of policy evaluation and policy improvement. For the second challenge, they resort to Monte Carlo methods to approximate the soft value function using sets of random samples. Although these two issues can be circumvented, these methods still have their drawbacks. The actor-critic design introduces an additional optimization process for training the actor, which may lead to optimization errors in practice~\cite{Hardt2015TrainFG}. Moreover, the results of Monte Carlo approximation may be susceptible to estimation errors and variances when there are an insufficient number of samples~\cite{Kalos1986monte, Tokdar2010ImportanceSA, Giles2013MultilevelMC}.

Instead of using energy-based models to represent MaxEnt RL frameworks, this paper investigates an alternative method employing normalizing flows (i.e., flow-based models), which offer solutions to the aforementioned challenges. Our framework is inspired by the recently introduced Energy-Based Normalizing Flows (EBFlow)~\cite{chao2023ebflow}. This design facilitates the derivation of an energy function from a flow-based model while supporting efficient sampling, which enables a unified representation of both the soft Q-function and its corresponding action sampling process. This feature allows the integration of the policy evaluation and policy improvement steps into a single objective training process. In addition, the probability density functions (pdf) of flow-based models can be calculated efficiently without approximation. This characteristic permits the derivation of an exact representation for the soft value function. Our experimental results demonstrate that the proposed framework exhibits superior performance on the commonly adopted MuJoCo benchmark~\cite{todorov2012mujoco, towers_gymnasium_2023}. Furthermore, the evaluation results on the Omniverse Isaac Gym environments~\cite{Makoviychuk2021IsaacGH} indicate that our framework excels in performing challenging robotic tasks that simulate real-world scenarios.
\section{Background and Related Works}
\label{sec:background}
In this section, we walk through the background material and the related works. We introduce the objective of MaxEnt RL in Section~\ref{sec:background:maxent}, describe existing actor-critic frameworks and soft value estimation methods in Section~\ref{sec:background:actor_critic}, and elaborate on the formulation of Energy-Based Normalizing Flow (EBFlow) in Section~\ref{sec:background:ebflow}.

\subsection{Maximum Entropy Reinforcement Learning}
\label{sec:background:maxent}
In this paper, we consider a Markov Decision Process (MDP) defined as a tuple $(\sS, \sA, p_T, \mathcal{R}, \discount, \pzero)$, where $\sS$ is a continuous state space, $\sA$ is a continuous action space, $p_T:\sS \times \sS \times \sA \rightarrow \R_{\geq 0}$ is the pdf of a next state $\vs_{t+1}$ given a current state $\vs_t$ and a current action $\va_t$ at timestep $t$, $\mathcal{R}:\sS\times\sA\rightarrow \R$ is the reward function, $0<\discount < 1$ is the discount factor, and $\pzero$ is the pdf of the initial state $\vs_0$. We adopt $r_t$ to denote $\mathcal{R}(\vs_t, \va_t)$, and use $\rho_\policy (\vs_t, \va_t)$ to represent the state-action marginals of the trajectory distribution induced by a policy $\pi(\va_t|\vs_t)$~\cite{haarnoja2017sql}.

Standard RL defines the objective as $\policy^{*}=\argmax_{\policy}\, \sum_{t} \E_{(\vs_t, \va_t) \sim \rho_\pi}[r_t]$ and has at least one deterministic optimal policy~\cite{10.5555/528623, 10.5555/3312046}. In contrast, MaxEnt RL~\cite{Ziebart2010modeling} augments the standard RL objective with the entropy of a policy at each visited state $\vs_t$. The objective of MaxEnt RL is written as follows:
\begin{equation}
\label{eq:reward_objective}
\policy_{\mathrm{MaxEnt}}^{*} =  \underset{\policy}{\argmax}\, \sum_{t} \E_{(\vs_t, \va_t)\sim \rho_{\policy}}\ofmid{ r_t+\alpha \H(\policy(\ \cdot\ |\vs_t)) },
\end{equation}%
where $\H(\policy(\ \cdot\ |\vs_t))\triangleq\E_{\va\sim \pi(\cdot|\vs_t)} [-\log \policy ( \va|\vs_t)]$ and $\alpha\in \R_{>0}$ is a temperature parameter for determining the relative importance of the entropy term against the reward. An extension of Eq.~(\ref{eq:reward_objective}) defined with $\gamma$ is discussed in~\cite{haarnoja2017sql}. To obtain $\policy_{\mathrm{MaxEnt}}^{*}$ described in Eq.~(\ref{eq:reward_objective}), the authors in~\cite{haarnoja2017sql} proposed to minimize the soft Bellman error for all states and actions. The solution can be expressed using the optimal soft Q-function $Q^{*}_{\mathrm{soft}}: \sS\times \sA \rightarrow \R$ and soft value function $V^{*}_{\mathrm{soft}}: \sS \rightarrow \R$ as follows:
\begin{equation}
\label{eq:optimal_policy}
\policy^{*}_{\mathrm{MaxEnt}}(\va_t|\vs_t) = \exp \of{ \frac{1}{\alpha}(Q^{*}_{\mathrm{soft}}(\vs_t, \va_t) - V^{*}_{\mathrm{soft}}(\vs_t)) }, \text{ where}
\end{equation}%
\begin{equation}
\label{eq:soft_value}
Q^{*}_{\mathrm{soft}}(\vs_t, \va_t) = r_t + \gamma \E_{\vs_{t+1} \sim p_T} \ofmid{ V^{*}_{\mathrm{soft}}(\vs_{t+1})},\,\,\,\,\,
V^{*}_{\mathrm{soft}}(\vs_t) = \alpha \log \int \exp \of{\frac{1}{\alpha}Q^{*}_{\mathrm{soft}}(\vs_t, \va)} d\va.
\end{equation}%
In practice, a policy can be modeled as $\policy_\theta(\va_t|\vs_t)=\exp(\frac{1}{\alpha}(Q_\theta (\vs_t, \va_t)-V_\theta(\vs_t))$ with parameter $\theta$, where the soft Q-function and the soft value function are expressed as $Q_\theta(\vs_t, \va_t)$ and $V_\theta(\vs_t) = \alpha \log \int \exp \of{\frac{1}{\alpha}Q_\theta(\vs_t, \va_t)} d\va_t$, respectively. Given an experience reply buffer $\D$ that stores transition tuples $(\vs_t, \va_t, r_t, \vs_{t+1})$, the training objective of $Q_\theta$ (which can then be used to derive $V_\theta$ and $\policy_\theta$) can be written as the following equation according to the soft Bellman errors:
\begin{equation}
\label{eq:sql_loss}
\L(\theta) = \E_{(\vs_t, \va_t, r_t, \vs_{t+1})\sim \D} \left[\frac{1}{2} \left(Q_\theta (\vs_{t}, \va_t) - (r_t + \discount V_\theta (\vs_{t+1})) \right)^2 \right].
\end{equation}
Nonetheless, directly using the objective in Eq.~(\ref{eq:sql_loss}) presents challenges for two reasons. First, drawing samples from an energy-based model (i.e., $\policy_\theta(\va_t|\vs_t)\propto\exp (Q_\theta(\vs_t, \va_t)/\alpha)$) requires a costly MCMC or variational inference process~\cite{Liu2016SteinVG, Welling2011BayesianLV},
which makes the interaction with the environment inefficient. Second, the calculation of the soft value function involves integration,
which require stochastic approximation methods~\cite{Kalos1986monte, Tokdar2010ImportanceSA, Giles2013MultilevelMC} to accomplish. To address these issues, the previous MaxEnt RL methods~\cite{haarnoja2017sql, haarnoja2017soft, Haarnoja2018LatentSP, mazoure2019leveraging, Ward2019ImprovingEI, zhang2023latent, messaoud2024sac} adopted actor-critic frameworks and introduced a number of techniques to estimate the soft value function. These methods are discussed in the next subsection.

\subsection{Actor-Critic Frameworks and Soft Value Estimation in MaxEnt RL}
\label{sec:background:actor_critic}
Previous MaxEnt RL methods~\cite{haarnoja2017sql, haarnoja2017soft, Haarnoja2018LatentSP, mazoure2019leveraging, Ward2019ImprovingEI, zhang2023latent, messaoud2024sac, yarats2021improving} employed actor-critic frameworks, in which the critic aims to capture the soft Q-function, while the actor learns to sample actions based on this soft Q-function. Available choices for modeling the actor include Gaussian models~\cite{haarnoja2017soft}, Gaussian mixture models~\cite{Nematollahi2022}, variational autoencoders (VAE)~\cite{yarats2021improving, zhang2023latent, Kingma2013AutoEncodingVB}, normalizing flows~\cite{Haarnoja2018LatentSP, mazoure2019leveraging}, and amortized SVGD (A-SVGD)~\cite{haarnoja2017sql, wang2016learning}, all of which support efficient sampling. The separation of the actor and the critic prevents the need for costly MCMC processes during sampling. However, this design induces additional training steps aimed to minimize the discrepancy between them. Let $\policy_\theta(\va_t|\vs_t) \propto \exp(\frac{1}{\alpha}Q_\theta(\vs_t, \va_t))$ and  $\policy_\phi(\va_t|\vs_t)$ denote the pdfs defined through the critic and the actor, respectively. The objective of this additional training process is formulated according to the reverse KL divergence  $\DKL[\policy_\phi(\,\cdot\,| \vs_t) || \policy_\theta(\,\cdot\,| \vs_t)]$ between $\policy_\phi$ and $\policy_\theta$, and is typically reduced as follows~\cite{haarnoja2017soft}:
\begin{equation}
\label{eq:improvement_loss}
\L(\phi) = \E_{\vs_t \sim \D}\ofmid{-\E_{\va_t \sim \policy_\phi} [Q_\theta(\vs_t, \va_t)- \alpha \log \policy_\phi(\va_t|\vs_t)]}.
\end{equation}%
The optimization processes defined by the objective functions $\L(\theta)$ and $\L(\phi)$ in Eqs.~(\ref{eq:sql_loss}) and (\ref{eq:improvement_loss}) are known as the policy evaluation steps and policy improvement steps~\cite{haarnoja2017soft}, respectively. Through alternating updates according to $\nabla_\theta \L(\theta)$ and $\nabla_\phi \L(\phi)$, the critic learns directly from the reward signals to estimate the soft Q-function, while the actor learns to draw samples based on the distribution defined by the critic.

Although the introduction of the actor enhances sampling efficiency, calculating the soft value function in Eq.~(\ref{eq:soft_value}) still requires Monte Carlo approximations for the computationally infeasible integration operation. Prior soft value estimation methods can be categorized into two groups: soft value estimation in Soft Q-Learning (SQL) and that in Soft Actor-Critic (SAC), with the former yielding a larger estimate than the latter, derived from Jensen's inequality (i.e., Proposition~\ref{prop:value_est} in Appendix~\ref{apx:estimation}). These two soft value estimation methods are discussed in the following paragraphs.

\paragraph{Soft Value Estimation in SQL.}~Soft Q-Learning~\cite{haarnoja2017sql} leverages importance sampling to convert the integration in Eq.~(\ref{eq:soft_value}) into an expectation, which can be estimated using a set of independent and identically distributed (i.i.d.) samples. To ensure the estimation variance is small, the authors in~\cite{haarnoja2017sql} proposed to utilize samples drawn from $\policy_\phi$. Let $\{ \va^{(i)}\}_{i=1}^M$ be a set of $M$ samples drawn from $\policy_\phi$. The soft value function is approximated based on the following formula:
\begin{equation}
\label{eq:unbiased}
\begin{aligned}
V_{\theta} (\vs_t)&=\alpha \log \int \exp \of{Q_\theta(\vs_t, \va)/ \alpha} d\va=\alpha \log \int \policy_\phi (\va | \vs_t) \frac{\exp \of {Q_\theta(\vs_t, \va)/ \alpha}}{\policy_\phi (\va | \vs_t)}  d\va\\
&=\alpha \log \E_{\va \sim \policy_\phi} \ofmid{\frac{\exp \of {Q_\theta(\vs_t, \va)/ \alpha}}{\policy_\phi (\va | \vs_t)} } \approx \alpha \log \of{\frac{1}{M} \sum_{i=1}^{M} \frac{\exp \of {Q_\theta(\vs_t, \va^{(i)})/ \alpha}}{\policy_\phi (\va^{(i)} | \vs_t)}}.
\end{aligned}
\end{equation}%
Eq.~(\ref{eq:unbiased}) has the least variance when $\policy_\phi(\cdot\,|\vs_t) \propto \exp (Q_\theta(\vs_t, \cdot)/\alpha)$~\cite{Tokdar2010ImportanceSA}. In addition, as $M\to \infty$, the law of large numbers ensures that this estimation converge to $V_{\theta}(\vs_t)$~\cite{Dekking2007AMI}.

\paragraph{Soft Value Estimation in SAC.}~Soft Actor-Critic~\cite{haarnoja2017soft} and its variants~\cite{Haarnoja2018LatentSP, mazoure2019leveraging, zhang2023latent, messaoud2024sac, Ward2019ImprovingEI} reformulated the soft value function $V_{\theta}(\vs_t)=\alpha \log \int \exp \of{Q_\theta(\vs_t, \va)/ \alpha} d\va$ as its equivalent form $\E_{\va \sim \policy_\theta} [Q_\theta(\vs_t, \va)- \alpha \log \policy_\theta(\va|\vs_t)]$ based on the relationship that $\policy_\theta(\va| \vs_t) = \exp (\frac{1}{\alpha}(Q_\theta(\vs_t, \va))-V_\theta(\vs_t))$. By assuming that the policy improvement loss $\L(\phi)$ is small (i.e., $\policy_\theta \approx \policy_\phi$), the soft value function $V_\theta$ can be estimated as follows:
\begin{equation}
\label{eq:biased}
\begin{aligned}
V_{\theta} (\vs_t) &= \E_{\va \sim \policy_\theta} [Q_\theta(\vs_t, \va)- \alpha \log \policy_\theta(\va|\vs_t)] \\
&\approx \E_{\va \sim \policy_\phi} [Q_\theta(\vs_t, \va)- \alpha \log \policy_\phi(\va|\vs_t)] \approx \frac{1}{M} \sum_{i=1}^{M} \of{Q_\theta(\vs_t, \va^{(i)})- \alpha \log \policy_\phi(\va^{(i)}|\vs_t)}.
\end{aligned}
\end{equation}%
An inherent drawback of the estimation in Eq.~(\ref{eq:biased}) is its reliance on the assumption $\policy_\phi \approx \policy_\theta$. In addition, the second approximation involves Monte Carlo estimation with $M$ samples $\{ \va^{(i)}\}_{i=1}^M$, where the computational cost increases with the number of samples $M$.

\subsection{Energy-Based Normalizing Flows}
\label{sec:background:ebflow}
Normalizing flows (i.e., flow-based models) are universal representations for pdf~\cite{Papamakarios2019NormalizingFF}. Given input data $\vx\in \Rd$, a latent variable $\vz\in \Rd$ with prior pdf $\prior$, and an invertible function $g_\theta = g^L_\theta \circ \cdots \circ g^1_\theta$ modeled as a neural network with $L$ layers, where $g^i_\theta:\Rd \to \Rd,\,\forall i\in \{1, \cdots,L\}$. According to the change of variable theorem and the distributive property of the determinant operation, a parameterized pdf $p_\theta$ can be described as follows:
\begin{equation}
\label{eq:flow}
p_\theta(\vx)= \prior \of{g_\theta(\vx)} \prod_{i=1}^{L} \abs{\det \of{\jacob_{g_\theta^i}(\vx^{i-1})}},
\end{equation}
where $\vx^0\triangleq \vx$ is the input, $\vx^{i}=g^i_\theta \circ \cdots \circ g^1_\theta(\vx)$ is the output of the $i$-th layer, and $\jacob_{g^i_\theta}(\vx^{i-1})\triangleq \dd{\vx^{i-1}}{g^i_\theta (\vx^{i-1})}$ represents the Jacobian of the $i$-th layer of $g_\theta$ with respect to $\vx^{i-1}$. To draw samples from $p_\theta$, one can first sample $\vz$ from $\prior$ and then derive $g^{-1}_\theta(\vz)$. To facilitate efficient computation of the pdf and the inverse of $g_\theta$, one can adopt existing architectural designs~\cite{Germain2015MADEMA, Kingma2016ImprovedVI, Papamakarios2017MaskedAF, Dinh2014NICENI, Dinh2016DensityEU, Kingma2018GlowGF} for $g_\theta$. Popular examples involve autoregressive layers~\cite{Germain2015MADEMA, Kingma2016ImprovedVI, Papamakarios2017MaskedAF} and coupling layers~\cite{Dinh2014NICENI, Dinh2016DensityEU, Kingma2018GlowGF}, which utilizes specially designed architectures to speed up the calculation.

Energy-Based Normalizing Flows (EBFlow)~\cite{chao2023ebflow} were recently introduced to reinterpret flow-based models as energy-based models. In contrast to traditional normalizing flow research~\cite{Dinh2014NICENI, Dinh2016DensityEU, Mller2018NeuralIS, Durkan2019NeuralSF} that focuses on the use of effective non-linearities, EBFlow emphasizes the use of both linear and non-linear transformations in the invertible transformation $g_\theta$. Such a concept was inspired by the development of normalizing flows with convolution layers~\cite{Kingma2018GlowGF, Hoogeboom2019EmergingCF, Ma2019MaCowMC, Lu2020WoodburyTF, Meng2022ButterflyFlowBI} or fully-connected layers~\cite{Gresele2020RelativeGO, Keller2020SelfNF}, linear independent component analysis (ICA) models~\cite{Hyvrinen2000IndependentCA, Hyvarinen2005score}, as well as energy-based training techniques~\cite{Hyvarinen2005score, Teh2003EnergyBasedMF, Grathwohl2020LearningTS}. Let $\Sl=\{i\,|\,g^i_\theta \text{ is linear}\}$ and $\Sn=\{i\,|\,g^i_\theta \text{ is non-linear}\}$ represent the sets of indices of the linear and non-linear transformations in $g_\theta$, respectively. As shown in~\cite{chao2023ebflow}, the Jacobian determinant product in Eq.~(\ref{eq:flow}) can be decomposed according to $\Sn$ and $\Sl$. This decomposition allows a flow-based model to be reinterpreted as an energy-based model, as illustrated in the following equation:
\begin{equation}
\label{eq:ebflow}
\begin{aligned}
         p_\theta (\vx)&= \underbrace{\prior \of{g_\theta(\vx)}\prod_{i\in \Sn}\abs{\det \of{\jacob_{g^i_\theta}(\vx^{i-1})}}}_{\text{Unnormalized Density}} \underbrace{\prod_{i\in \Sl}\abs{\det \of{\jacob_{g^i_\theta}}}}_{\text{Const.}}
         \triangleq \underbrace{\exp \of{-E_\theta(\vx)}}_{\text{Unnormalized Density}} \underbrace{Z^{-1}_\theta.}_{\ \text{Const.}}\\
    \end{aligned}
\end{equation}%
In EBFlow, the energy function $E_\theta(\vx)$ is defined as $-\log (\prior \of{g_\theta(\vx)}\prod_{i\in \Sn}|\det (\jacob_{g^i_\theta}(\vx^{i-1}))|)$ and the normalizing constant $Z_\theta=\int \exp(-E_\theta(\vx)) d\vx=\prod_{i\in \Sl}|\det \jacob_{g^i_\theta}|^{-1}$ is independent of $\vx$. The input-independence of $Z_\theta$ holds since $g^i_\theta$ is either a first-degree or zero-degree polynomial for any $i\in\Sl$, and thus its Jacobian is a constant to $\vx^{i-1}$. This technique was originally proposed to reduce the training cost of maximum likelihood estimation for normalizing flows. However, we discovered that EBFlow is ideal for MaxEnt RL. Its unique capability to represent a parametric energy function with an associated sampler $g^{-1}_\theta$, and to calculate a normalizing constant $Z_\theta$ without integration are able to address the challenges mentioned in Section~\ref{sec:background:actor_critic}. We discuss our insights in the next section.

\section{Methodology}
\label{sec:methodology}
In this section, we introduce our proposed MaxEnt RL framework modeled using EBFlow. In Section~\ref{sec:methodology:ebflow_policy}, we describe the formulation and discuss its training and inference processes. In Section~\ref{sec:methodology:technique}, we present two techniques for improving the training of our framework. Ultimately, in Section~\ref{sec:methodology:algorithm}, we offer an algorithm summary.

\subsection{MaxEnt RL via EBFlow}
\label{sec:methodology:ebflow_policy}
We propose a new framework for modeling \textbf{M}ax\textbf{E}nt RL using EBFl\textbf{ow}, which we call \textbf{MEow}. This framework possesses several unique features. First, as EBFlow enables simultaneous modeling of an unnormalized density and its sampler, MEow can unify the actor and the critic previously separated in MaxEnt RL frameworks. This feature facilitates the integration of policy improexvement steps with policy evaluation steps, and results in a single objective training process. Second, the normalizing constant of EBFlow is expressed in closed form, which enables the calculation of the soft value function without resorting to the approximation methods mentioned in Eqs.~(\ref{eq:unbiased})~and~(\ref{eq:biased}). Third, given that normalizing flow is a universal approximator for probability density functions, our policy's expressiveness is not constrained, and can model multi-modal action distributions. 

In MEow, the policy is described as a state-conditioned EBFlow, with its pdf presented as follows:
\begin{equation}
\label{eq:meow}
\begin{aligned}
\policy_\theta(\va_t|\vs_t)
&= \underbrace{\prior \of{g_\theta(\va_t|\vs_t)}\prod_{i \in \Sn}\abs{\det \of{\jacob_{g^i_\theta}(\va_t^{i-1}|\vs_t)}}}_{\text{Unnormalized Density}} \underbrace{\prod_{i\in \Sl}\abs{\det (\jacob_{g^i_\theta}(\vs_t))}}_{\text{Norm. Const.}} \\
&\triangleq \underbrace{\exp \of{\frac{1}{\alpha} Q_\theta(\vs_t, \va_t)}}_{\text{Unnormalized Density}} \underbrace{\exp \of{-\frac{1}{\alpha} V_\theta(\vs_t)}}_{\ \text{Norm. Const.}},\\
\end{aligned}
\end{equation}%
where the soft Q-function and the soft value function are selected as follows:
\begin{equation}
\label{eq:Q_value_function}
Q_\theta(\vs_t, \va_t) \triangleq \alpha \log \prior \of{g_\theta(\va_t|\vs_t)}\prod_{i \in \Sn}\abs{\det \of{\jacob_{g^i_\theta}(\va_t^{i-1}|\vs_t)}}, V_\theta(\vs_t) \triangleq  -\alpha \log \prod_{i\in \Sl}\abs{\det (\jacob_{g^i_\theta}(\vs_t))}.
\end{equation}%

Such a selection satisfies $V_\theta(\vs_{t}) = \alpha \log \int \exp (Q_\theta(\vs_t, \va)/ \alpha) d\va$ based on the property of EBFlow. In addition, both $Q_\theta$ and $V_\theta$ have a common output codomain $\R$, which enables them to learn to output arbitrary real values. These properties are validated in Proposition~\ref{prop:ebflow_property}, with the proof provided in Appendix~\ref{apx:properties}. The training and inference processes of MEow are summarized as follows.

\begin{proposition}
\label{prop:ebflow_property} Eq.~(\ref{eq:Q_value_function}) satisfies the following statements: (1) Given that the Jacobian of $g_\theta$ is non-singular, $V_\theta (\vs_t)\in \R$ and $Q_\theta(\vs_t, \va_t)\in \R$, $\forall \va_t \in\sA, \forall \vs_t \in \sS$. (2) $V_\theta(\vs_{t}) = \alpha \log \int \exp \of{Q_\theta(\vs_t, \va)/ \alpha} d\va$.
\end{proposition}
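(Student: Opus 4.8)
The plan is to handle the two claims separately. For statement (1), I would reduce the finiteness of $Q_\theta$ and $V_\theta$ to the strict positivity of the arguments of the two logarithms appearing in Eq.~(\ref{eq:Q_value_function}). For statement (2), I would exploit the fact that the full conditional pdf $\policy_\theta$ in Eq.~(\ref{eq:meow}) is a genuinely normalized density, together with the action-independence of the linear-layer Jacobian determinants, to evaluate the action integral in closed form.

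For statement (1), I would invoke the chain rule and the multiplicativity of the determinant: since $g_\theta = g^L_\theta \circ \cdots \circ g^1_\theta$, we have $\det \jacob_{g_\theta} = \prod_{i=1}^{L} \det \jacob_{g^i_\theta}$. Non-singularity of $\jacob_{g_\theta}$ then forces $\det \jacob_{g^i_\theta} \neq 0$ for every $i$, so each factor satisfies $\abs{\det \jacob_{g^i_\theta}} > 0$, in particular for every $i \in \Sn$ and every $i \in \Sl$. Assuming the base density $\prior$ is strictly positive on $\Rd$ (e.g., a Gaussian prior), the argument of the logarithm defining $Q_\theta$ is a product of strictly positive terms, hence finite and positive; likewise the argument defining $V_\theta$ is a finite positive product over $i \in \Sl$. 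Taking logarithms therefore yields finite real values, establishing $Q_\theta(\vs_t, \va_t) \in \R$ and $V_\theta(\vs_t) \in \R$ for all $\va_t \in \sA$ and $\vs_t \in \sS$.

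For statement (2), I would begin from Eq.~(\ref{eq:meow}), which by the change-of-variable theorem (Eq.~(\ref{eq:flow})) represents a valid conditional density, so that $\int \policy_\theta(\va|\vs_t)\, d\va = 1$. The key observation is that the factor $\prod_{i\in\Sl}\abs{\det \of{\jacob_{g^i_\theta}(\vs_t)}}$ depends only on $\vs_t$ and not on $\va$, so it can be pulled outside the action integral. Recognizing the remaining unnormalized factor as $\exp\of{Q_\theta(\vs_t, \va)/\alpha}$ by the definition in Eq.~(\ref{eq:Q_value_function}), I obtain $1 = \prod_{i\in\Sl}\abs{\det \of{\jacob_{g^i_\theta}(\vs_t)}} \int \exp\of{Q_\theta(\vs_t, \va)/\alpha}\, d\va$. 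Solving for the integral and applying $\alpha\log(\cdot)$ gives $\alpha\log \int \exp\of{Q_\theta(\vs_t, \va)/\alpha}\, d\va = -\alpha\log \prod_{i\in\Sl}\abs{\det \of{\jacob_{g^i_\theta}(\vs_t)}} = V_\theta(\vs_t)$, which is exactly the claimed identity.

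The main obstacle is rigorously justifying that the linear-layer contribution factors out of the action integral, i.e., that $\prod_{i\in\Sl}\abs{\det \of{\jacob_{g^i_\theta}(\vs_t)}}$ is genuinely independent of $\va$. This is the conditional analogue of the defining property of EBFlow in Eq.~(\ref{eq:ebflow}); making it precise requires checking that the state-conditioning enters only the coefficients of the affine (linear) layers, so that the Jacobian of each such layer with respect to the action is a constant matrix in $\va$ (though possibly a function of $\vs_t$). Once this action-independence is secured, the remaining manipulations — exchanging the constant factor with the integral and taking logarithms — are routine.
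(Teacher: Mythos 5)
Your proposal is correct and takes essentially the same route as the paper's proof: statement (1) follows from strict positivity of the Jacobian-determinant products (together with positivity of the Gaussian prior), and statement (2) follows by integrating the normalized density in Eq.~(\ref{eq:meow}), pulling the action-independent linear-layer factor $\prod_{i\in\Sl}\abs{\det(\jacob_{g^i_\theta}(\vs_t))}$ outside the integral, and applying $\alpha\log(\cdot)$ to both sides. The action-independence you flag as the main obstacle is already guaranteed by the EBFlow construction itself (cf.\ Eq.~(\ref{eq:ebflow})): each $g^i_\theta$ with $i\in\Sl$ is affine in its input, so its Jacobian with respect to $\va^{i-1}$ is constant, with the state entering only through the (hypernetwork-generated) coefficients.
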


\paragraph{Training.}~With $Q_\theta$ and $V_\theta$ defined in Eq.~(\ref{eq:Q_value_function}), the loss $\L(\theta)$ in Eq.~(\ref{eq:sql_loss}) can be calculated without using Monte Carlo approximation of the soft value function target. Compared to the previous MaxEnt RL frameworks that rely on Monte Carlo estimation (i.e., Eqs.~(\ref{eq:unbiased}) and (\ref{eq:biased})), our framework offers the advantage of avoiding the errors induced by the approximation. In addition, MEow employs a unified policy rather than two separate roles (i.e., the actor and the critic), which eliminates the need for minimizing an additional policy improvement loss $\L(\phi)$ to bridge the gap between $\policy_\theta$ and $\policy_\phi$. This simplifies the training process of MaxEnt RL, and obviates the requirement of balancing between the two optimization loops. 

\paragraph{Inference.}~The sampling process of $\policy_\theta$ can be efficiently performed by deriving the inverse of $g_\theta$, as supported by several normalizing flow architectures~\cite{Germain2015MADEMA, Kingma2016ImprovedVI, Papamakarios2017MaskedAF, Dinh2014NICENI, Dinh2016DensityEU, Kingma2018GlowGF}. In addition, unlike previous actor-critic frameworks susceptible to discrepancies between $\policy_\theta$ and $\policy_\phi$, the distribution established via $g^{-1}_\theta(\vz|\vs_t)$, where $\vz \sim \prior$, is consistently aligned with the pdf defined by $Q_\theta$. As a result, the actions taken by MEow can precisely reflect the learned soft Q-function.

\subsection{Techniques for Improving the Training and Inference Processes of MEow}
\label{sec:methodology:technique}
In this subsection, we introduce a number of training and inference techniques aimed at improving MEow
while preserving its key features discussed in the previous subsection. For clarity, we refer to the MEow framework introduced in the last section as `MEow (Vanilla)'. 

\begin{wrapfigure}{r}{0.4425\linewidth}
    \centering
    \footnotesize
    \includegraphics[width=\linewidth]{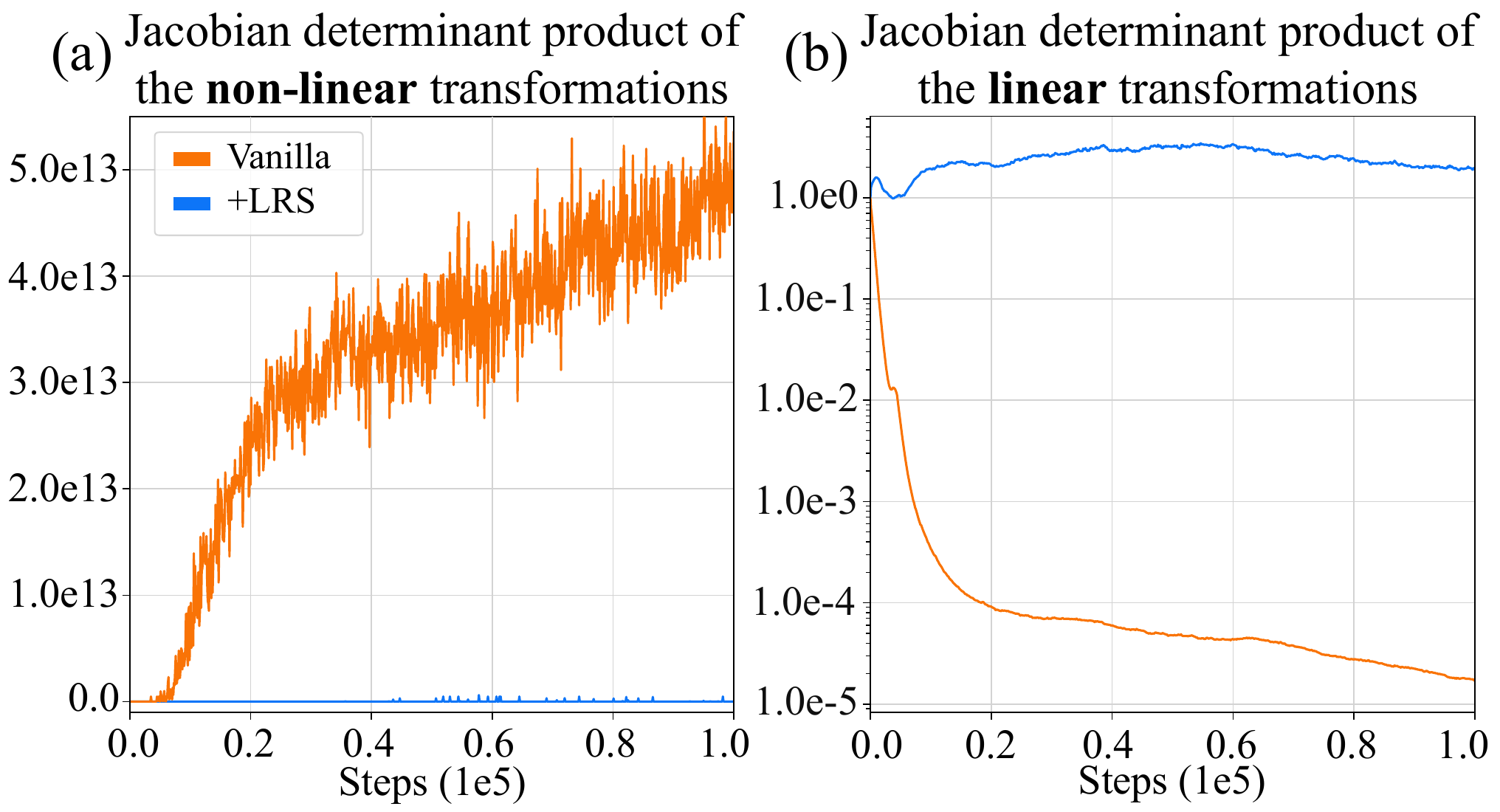}
    \vspace{-1.9em}
    \caption{The Jacobian determinant products for (a) the non-linear and (b) the linear transformations, evaluated during training in the Hopper-v4 environment. Subfigure (b) is presented on a log scale for better visualization. This experiment adopt the affine coupling layers~\cite{Dinh2016DensityEU} as the nonlinear transformations.}
    \vspace{-0.5em}
    \label{fig:mujoco_det}
\end{wrapfigure}
\paragraph{Learnable Reward Shifting (LRS).}~Reward shifting \cite{Randlv1998LearningTD, Ng1999PolicyIU, Laud2004TheoryAA, sun2022optimistic, Zhang2023ada} is a technique for shaping the reward function. This technique enhances the learning process
by incorporating a shifting term 
in the reward function, which leads to a shifted optimal soft Q-function 
in MaxEnt RL. Inspired by this, this work proposes modeling a reward shifting function $b_\theta:\sS \to \R$ with a neural network to enable the automatic learning of a reward shifting term. For notational simplicity, the parameters are denoted using $\theta$, and the details of the architecture are presented in Appendix~\ref{apx:setup:architecture}. The soft Q-function $Q^{b}_\theta$ augmented by $b_\theta$ is defined as follows:
\begin{equation}
\label{eq:Q_value_scale}
Q^b_\theta (\vs_t, \va_t) = Q_\theta (\vs_t, \va_t) + b_\theta(\vs_t).
\end{equation}%
The introduction of $Q^b_\theta$ results in a corresponding shifted soft value function $V_\theta^b(\vs_t)\triangleq$ $\alpha \log \int \exp(Q^b_\theta(\vs_t, \va)/\alpha) d\va=V_\theta(\vs_t)+b_\theta(\vs_t)$ (i.e., Proposition~\ref{prop:trans_function} in Appendix~\ref{apx:properties}), which can be calculated without Monte Carlo estimation. Moreover, with the incorporation of $b_\theta$, the policy $\policy_\theta$ remains invariant since $\exp(\frac{1}{\alpha}(Q^b_\theta(\vs_t, \va_t) - V_\theta^b(\vs_t))) = \exp(\frac{1}{\alpha}((Q_\theta(\vs_t, \va_t) + b_\theta(\vs_t))- (V_\theta(\vs_t)+b_\theta(\vs_t)) )) = \exp(\frac{1}{\alpha}(Q_\theta(\vs_t, \va_t) - V_\theta(\vs_t)))$, which allows the use of $g^{-1}_\theta$ for efficiently sampling actions. As evidenced in Fig.~\ref{fig:mujoco_det}, this method effectively addresses the issues of the significant growth and decay of Jacobian determinants of $g_\theta$ (discussed in Appendix~\ref{apx:overflow}). In Section~\ref{sec:experiments:ablation}, we further demonstrate that the performance of MEow can be significantly improved through this technique.

\paragraph{Shifting-Based Clipped Double Q-Learning (SCDQ).}~As observed in~\cite{Fujimoto2018AddressingFA}, the overestimation of value functions often occurs in training. To address this issue, the authors in~\cite{Fujimoto2018AddressingFA} propose clipped double Q-learning, which employs two separate Q-functions and uses the one with the smaller output to estimate the value function during training. This technique is also used in MaxEnt RL frameworks~\cite{haarnoja2017soft, Haarnoja2018LatentSP, mazoure2019leveraging, Ward2019ImprovingEI, zhang2023latent}. Inspired by this and our proposed learnable reward shifting, we further propose a shifting-based method that adopts two learnable reward shifting functions, $b^{(1)}_{\theta}$ and $b^{(2)}_{\theta}$, without duplicating the soft Q-function $Q_\theta$ and soft value function $V_\theta$ defined by $g_\theta$. The soft Q-functions $Q^{(1)}_{\theta}$ and $Q^{(2)}_{\theta}$ with corresponding learnable reward shifting functions $b^{(1)}_{\theta}$ and $b^{(2)}_{\theta}$ can be obtained using Eq.~(\ref{eq:Q_value_scale}), while the soft value function $V_\theta^{\text{clip}}$ is written as the following formula:
\begin{equation}
\label{eq:double_V}
V^{\text{clip}}_{\theta} (\vs_t) = \min \of{V_\theta (\vs_t) + b_\theta^{(1)}(\vs_t), V_\theta (\vs_t) + b_\theta^{(2)}(\vs_t) }= V_\theta (\vs_t) + \min \of{b_\theta^{(1)}(\vs_t), b_\theta^{(2)}(\vs_t)}.
\end{equation}%
This design also prevents the production of two policies in MEow, as having two policies can complicate the inference procedure. 
In our ablation analysis presented in Section~\ref{sec:experiments:ablation}, we demonstrate that this technique can effectively improve the training process of MEow.

\paragraph{Deterministic Policy for Inference.}~As observed in~\cite{haarnoja2017sql}, deterministic actors typically performed better as compared to its stochastic variant during the inference time. Such a problem can be formalized as finding an action $\va$ that maximizes $Q(\vs_t, \va)$ for a given $\vs_t$. Since $\sA$ is a continuous space, finding such a value would require extensive calculation. In the MEow framework, this value can be derived by making assumptions on the model architecture construction. Our key observation is that if the Jacobian determinants of the non-linearities (i.e., $g_\theta^i \in \Sn$) are constants with respect to its inputs, and that $\argmax_{\vz}\,\prior(\vz)$ can be directly obtained, then the action $\va$ that maximizes $Q(\vs_t, \va)$ can be efficiently derived according to the following proposition.
\begin{proposition}
\label{prop:max_Q} Given that $|\det(\jacob_{g^i_\theta}(\va^{i-1}|\vs_t))|$ is a constant with respect to $\va^{i-1}$, then $g^{-1}_\theta (\argmax_\vz\,\prior(\vz)|\vs_t) = \argmax_{\va}\,Q_\theta(\vs_t, \va)$.
\end{proposition}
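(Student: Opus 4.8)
The plan is to reduce the maximization of $Q_\theta(\vs_t, \cdot)$ over the action space to the maximization of the latent prior $\prior$, and then transport the latent maximizer back through the inverse flow $g^{-1}_\theta(\cdot|\vs_t)$. First I would expand $Q_\theta$ via its definition in Eq.~(\ref{eq:Q_value_function}) into an additive form,
\begin{equation*}
Q_\theta(\vs_t, \va) = \alpha \log \prior \of{g_\theta(\va|\vs_t)} + \alpha \sum_{i \in \Sn} \log \abs{\det \of{\jacob_{g^i_\theta}(\va^{i-1}|\vs_t)}},
\end{equation*}
separating the prior-dependent term from the sum of non-linear log-Jacobian-determinants.

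Next I would invoke the hypothesis. By assumption each factor $\abs{\det(\jacob_{g^i_\theta}(\va^{i-1}|\vs_t))}$ with $i \in \Sn$ is constant in its argument; since the intermediate activation $\va^{i-1}$ is itself a deterministic function of $\va$, each such factor---and hence the whole second term---is constant with respect to $\va$, equal to some value $c(\vs_t)$ independent of the action. This yields $Q_\theta(\vs_t, \va) = \alpha \log \prior(g_\theta(\va|\vs_t)) + c(\vs_t)$. Discarding the additive constant and using that $\alpha > 0$ together with the strict monotonicity of $\log$, I would conclude $\argmax_{\va} Q_\theta(\vs_t, \va) = \argmax_{\va} \prior(g_\theta(\va|\vs_t))$.

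The final step is a change-of-variables argument using invertibility of the flow. Since $g_\theta(\cdot|\vs_t)$ is a bijection between $\sA$ and the latent domain, the substitution $\vz = g_\theta(\va|\vs_t)$, equivalently $\va = g^{-1}_\theta(\vz|\vs_t)$, is a one-to-one correspondence that preserves the objective value, i.e.\ the value $\prior(g_\theta(\va|\vs_t))$ at $\va$ equals $\prior(\vz)$ at the corresponding $\vz$. Thus $\va^\star$ maximizes $\va \mapsto \prior(g_\theta(\va|\vs_t))$ if and only if $\vz^\star = g_\theta(\va^\star|\vs_t)$ maximizes $\prior$, which gives $\argmax_{\va} \prior(g_\theta(\va|\vs_t)) = g^{-1}_\theta(\argmax_{\vz} \prior(\vz)|\vs_t)$ and hence the claim.

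The proposition is structurally simple, so the only real care is needed at the second step: the hypothesis is phrased as constancy with respect to the layer input $\va^{i-1}$, and one must explicitly note that this upgrades to constancy with respect to the action $\va$ because $\va^{i-1}$ is a fixed function of $\va$---this is the sole place the assumption is used. The change-of-variables step is routine given invertibility; the one caveat is that if $\prior$ admits multiple maximizers then $\argmax$ should be read set-valued, in which case the bijection maps the latent maximizing set onto the action maximizing set and the stated identity still holds.
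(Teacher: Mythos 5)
Your proposal is correct and follows essentially the same route as the paper's proof: isolate the non-linear Jacobian factors as an action-independent constant, discard it and the strictly increasing $\alpha\log(\cdot)$ wrapper, and then push the latent maximizer of $\prior$ through the bijection $g^{-1}_\theta(\cdot|\vs_t)$. Your added remarks---that constancy in $\va^{i-1}$ upgrades to constancy in $\va$ since $\va^{i-1}$ is a deterministic function of $\va$, and that $\argmax$ should be read set-valued if $\prior$ has multiple maximizers---are careful refinements of details the paper leaves implicit, not a different argument.
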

The proof is provided in Appendix~\ref{apx:properties}. It is important to note that $g^i_\theta$ can still be a non-linear transformation, given that $|\det(\jacob_{g^i_\theta}(\va^{i-1}|\vs_t))|$ is a constant. To construct such a model, a Gaussian prior with the additive coupling transformations~\cite{Dinh2014NICENI} can be used as non-linearities. Under such a design, an action can be derived by calculating $g^{-1}_\theta(\vmu|\vs_t)$, where $\vmu$ represents the mean of the Gaussian distribution. We elaborate on our model architecture design in Appendix~\ref{apx:setup:architecture}, and provide a performance comparison between MEow evaluated using a stochastic policy (i.e., $\va_t \sim \policy_\theta(\cdot\,|\vs_t)$) and a deterministic policy (i.e., $\va_t = \argmax_{\va}\,Q_\theta(\vs_t, \va)$) in Section~\ref{sec:experiments:ablation}.

\subsection{Algorithm Summary}
\label{sec:methodology:algorithm}
We summarize the training process of MEow in Algorithm~\ref{alg:maxent_ebflow}. The algorithm integrates the policy evaluation steps with the policy improvement steps, resulting in a single loss training process. This design differs from previous actor-critic frameworks, which typically perform two consecutive updates in each training step. In Algorithm~\ref{alg:maxent_ebflow}, the learning rate is denoted as $\beta$. A set of shadow parameters $\theta'$ is maintained for calculating the delayed target values~\cite{Mnih2015HumanlevelCT}, and is updated according to the Polyak averaging~\cite{Li2021ASA} of $\theta$, i.e., $\theta' \leftarrow (1-\smooth)\theta' + \smooth \theta$, where $\smooth$ is the target smoothing factor.
\begin{algorithm*}[t]
\footnotesize
\setstretch{1.2}
\caption{Pseudo Code of the Training Process of MEow} \label{alg:maxent_ebflow}
\hspace*{\algorithmicindent} \textbf{Input:} Learnable parameters $\theta$ and shadow parameters $\theta'$. Target smoothing factor $\smooth$. Learning rate $\beta$. \\
\hspace*{4.5em} Neural networks $g_\theta(\cdot\,|\,\cdot)$, $b^{(1)}_\theta(\cdot)$, and $b^{(2)}_\theta(\cdot)$. Temperature parameter $\alpha$. Discount factor $\gamma$.
\begin{algorithmic}[1]
\footnotesize
\For{each training step}
        \vspace{0.5em}
        \State \texttt{\textcolor{gray}{$\triangleright$ Extend the Replay Buffer.}}
        \State $\va_t = g^{-1}_\theta(\vz | \vs_t)$, $\vz \sim \prior(\cdot)$.
        \State $\vs_{t+1} \sim p_T(\cdot | \vs_t, \va_t)$.
        \State $\D \leftarrow \D \cup \{(\vs_{t}, \va_t, r_t, \vs_{t+1})\}$.
        \vspace{0.5em}
        \State \texttt{\textcolor{gray}{$\triangleright$ Update Policy.}}
        \State $(\vs_{t}, \va_t, r_t, \vs_{t+1}) \sim \D$.
        \State $Q_\theta(\vs_t, \va_t) = \alpha \log (\prior \of{g_\theta(\va_t|\vs_t)}\prod_{i \in \Sn}|\det (\jacob_{g^i_\theta}(\va_t^{i-1}|\vs_t))|)$.\Comment{Eq.~(\ref{eq:Q_value_function})}
        \State $V_{\theta'}(\vs_{t+1}) = -\alpha \log \prod_{i \in \Sl}|\det (\jacob_{g^i_{\theta'}}(\vs_{t+1}))|$.\Comment{Eq.~(\ref{eq:Q_value_function})}
        \State $Q^{(1)}_{\theta}(\vs_{t}, \va_t)=Q_{\theta}(\vs_{t}, \va_t)+b_{\theta}^{(1)}(\vs_t)$ and $Q^{(2)}_{\theta}(\vs_{t}, \va_t)=Q_{\theta}(\vs_{t}, \va_t)+b_{\theta}^{(2)}(\vs_t)$.\Comment{Eq.~(\ref{eq:Q_value_scale})}
        \State $V^{\text{clip}}_{\theta'}(\vs_{t+1}) = V_{\theta'} (\vs_{t+1}) + \min \of{ b_{\theta'}^{(1)}(\vs_{t+1}), b_{\theta'}^{(2)}(\vs_{t+1})}$.\Comment{Eq.~(\ref{eq:double_V})}
        \State $\L (\theta)=\frac{1}{2} (Q_{\theta}^{(1)}(\vs_{t}, \va_t) - (r_t + \discount V^{\text{clip}}_{\theta'} (\vs_{t+1})) )^2+\frac{1}{2} (Q_{\theta}^{(2)}(\vs_{t}, \va_t) - (r_t + \discount V^{\text{clip}}_{\theta'} (\vs_{t+1})) )^2$.\Comment{Eq.~(\ref{eq:sql_loss})}
        \State $\theta \leftarrow \theta + \beta \nabla_\theta \L (\theta)$.
        \State $\theta' \leftarrow (1-\smooth) \theta' + \smooth \theta$.
        \vspace{0.5em}
\EndFor
\end{algorithmic}
\end{algorithm*}
\vspace{-0.5em}
\section{Experiments}
\label{sec:experiments}
In the following sections, we first present an intuitive example of MEow trained in a two-dimensional multi-goal environment~\cite{haarnoja2017sql} in Section~\ref{sec:experiments:multigoal}. We then compare MEow's performance against several continuous-action RL baselines in five MuJoCo environments~\cite{todorov2012mujoco, towers_gymnasium_2023} in Section~\ref{sec:experiments:mujoco}. Next, in Section~\ref{sec:experiments:isaac}, we evaluate MEow's performance on a number of Omniverse Isaac Gym environments~\cite{Makoviychuk2021IsaacGH} simulated based on real-world robotic application scenarios. Lastly, in Section~\ref{sec:experiments:ablation}, we provide an ablation analysis to inspect the effectiveness of each proposed technique. Among all experiments, we maintain the same model architecture, while adjusting inputs and outputs according to the state space and action space for each environment. We construct $g_\theta$ using the additive coupling layers~\cite{Dinh2014NICENI} with element-wise linear transformations, utilize a unit Gaussian as $\prior$, and model the learnable adaptive reward shifting functions $b_\theta$ as multi-layer perceptrons (MLPs). For detailed descriptions of the experimental setups, please refer to Appendix~\ref{apx:setup}.

\subsection{Evaluation on a Multi-Goal Environment}
\label{sec:experiments:multigoal}
In this subsection, we present an example of MEow trained in a two-dimensional multi-goal environment~\cite{haarnoja2017sql}. The environment involves four goals, indicated by the red dots in Fig.~\ref{fig:multigoal}~(a). The reward function is defined by the negative Euclidean distance from each state to the nearest goal, and the corresponding reward landscape is depicted using contours in Fig.~\ref{fig:multigoal}~(a). The gradient map in Fig.~\ref{fig:multigoal}~(a) represents the soft value function predicted by our model. The blue lines extending from the center represent the trajectories produced using our policy. 

\begin{wrapfigure}{r}{0.51\linewidth}
    \centering
    \footnotesize
    \includegraphics[width=\linewidth]{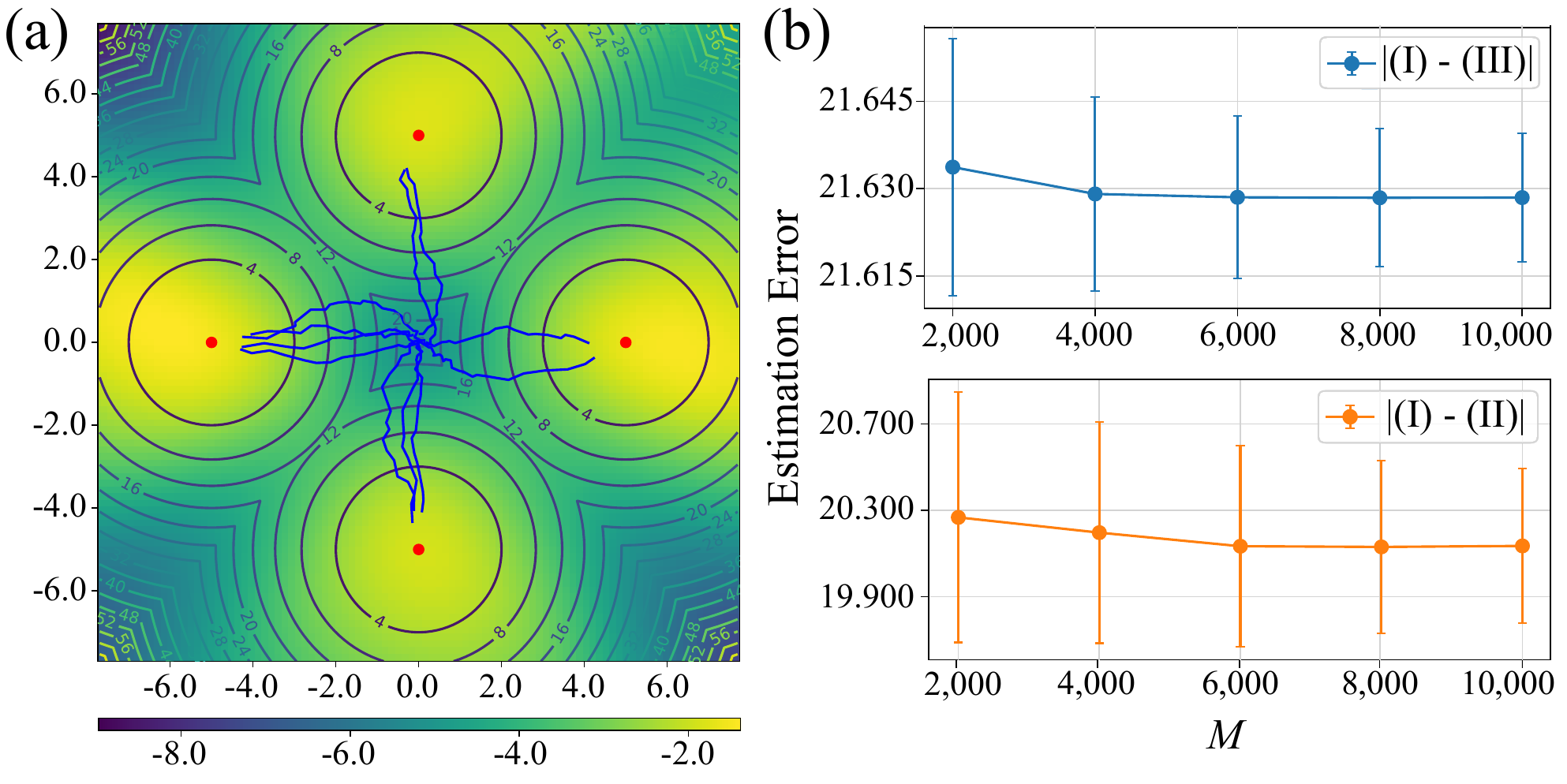}
    \vspace{-1em}
    \caption{(a) The soft value function and the trajectories generated using our method on the multi-goal environment. (b) The estimation error evaluated at the initial state under different choices of $M$.}
    \vspace{-1.7em}
    \label{fig:multigoal}
\end{wrapfigure}

As illustrated in Fig.~\ref{fig:multigoal}~(a), our model's soft value function predicts higher values around the goals, suggesting successful learning of the goal positions through rewards. In addition, the trajectories demonstrate our agent's correct transitions towards the goals, which validates the effectiveness of our learned policy. To illustrate the potential impact of approximation errors that might emerge when employing previous soft value estimation methods, we compare three calculation methods for the soft value function: (I) Our approach (i.e., Eq.~(\ref{eq:Q_value_function})): $V_\theta (\vs_t)$, (II) SQL-like (i.e., Eq.~(\ref{eq:unbiased})): $\alpha \log (\frac{1}{M} \sum_{i=1}^{M} \frac{\exp (Q_\theta(\vs_t, \va^{(i)})/ \alpha)}{\policy_\phi (\va^{(i)} | \vs_t)})$, and (III) SAC-like (i.e., Eq.~(\ref{eq:biased})): $\frac{1}{M} \sum_{i=1}^{M} (Q_\theta(\vs_t, \va^{(i)})- \alpha \log \policy_\phi(\va^{(i)}|\vs_t))$, where $\{\va^{(i)}\}_{i=1}^{M}$ is sampled from $\policy_\phi$. The approximation errors of the soft value functions at the initial state are calculated using the Euclidean distances between (I) and (II), and between (I) and (III), for various values of $M$. As depicted in Fig.~\ref{fig:multigoal}~(b), the blue line and the orange line decreases slowly with respect to $M$. These results suggest that Monte Carlo estimation converges slowly, making approximation methods such as Eqs.~(\ref{eq:unbiased})~and~(\ref{eq:biased}) challenging to achieve accurate predictions.

\begin{figure}[t]
    \centering
    \footnotesize
    \includegraphics[width=1.0\linewidth]{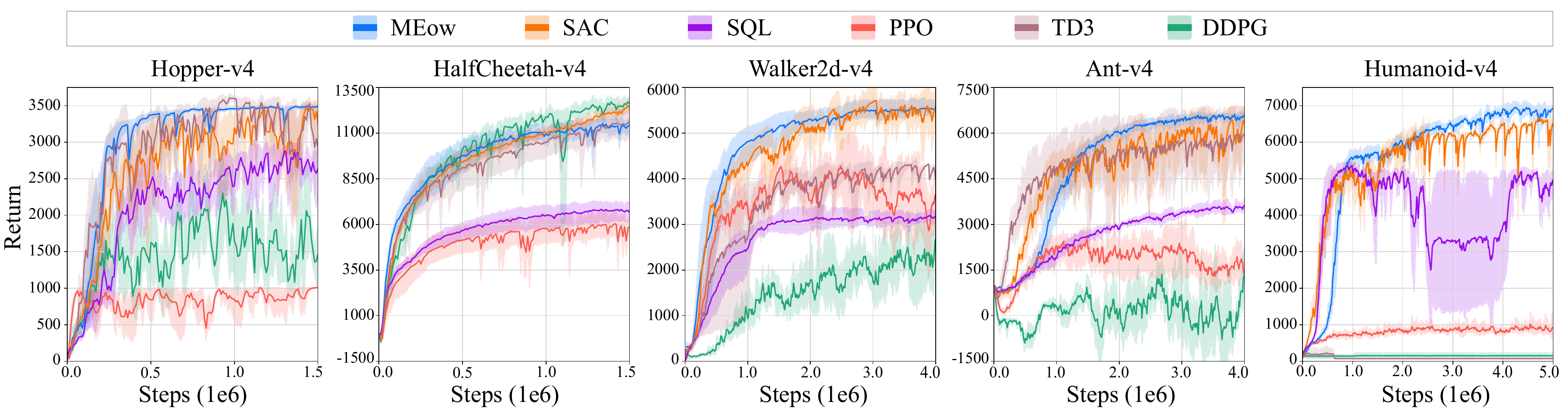}
    \vspace{-1.7em}
    \caption{The results in terms of total returns versus the number of training steps evaluated on five MuJoCo environments. Each curve represents the mean performance, with shaded areas indicating the 95\% confidence intervals, derived from five independent runs with different seeds.}
    \label{fig:mujoco_main}
    \vspace{-0.5em}
\end{figure}
\subsection{Performance Comparison on the MuJoCo Environments}
\label{sec:experiments:mujoco}
In this experiment, we compare MEow with several commonly-used continuous control algorithms on five MuJoCo environments~\cite{todorov2012mujoco} from Gymnasium~\cite{towers_gymnasium_2023}. The baseline algorithms include SQL~\cite{haarnoja2017sql}, SAC~\cite{haarnoja2017soft}, deep deterministic policy gradient (DDPG)~\cite{Lillicrap2015ContinuousCW}, twin delayed deep deterministic policy gradient (TD3)~\cite{Fujimoto2018AddressingFA}, and proximal policy optimization (PPO)~\cite{Schulman2017ProximalPO}. The results for SAC, DDPG, TD3, and PPO were reproduced using Stable Baseline 3 (SB3)~\cite{stable-baselines3}, utilizing SB3's refined hyperparameters. The results for SQL were reproduced using our own implementation, as SB3 does not support SQL and the official code is not reproducible. Our implementation adheres to SQL's original paper. Each method is trained independently under five different random seeds, and the evaluation curves for each environment are presented in the form of the means and the corresponding confidence intervals.

As depicted in Fig.~\ref{fig:mujoco_main}, MEow performs comparably to SAC and outperforms the other baseline algorithms in most of the environments. Furthermore, in environments with larger action and state dimensionalities, such as `Ant-v4' and `Humanoid-v4', MEow offers performance improvements over SAC and exhibits fewer spikes in the evaluation curves. These results suggest that MEow is capable of performing high-dimensional tasks with stability. To further investigate the performance difference between MEow and SAC, we provide a thorough comparison between MEow, SAC~\cite{haarnoja2017soft}, Flow-SAC~\cite{Haarnoja2018LatentSP, mazoure2019leveraging}, and their variants in Appendix~\ref{apx:supp_exp:param}. The results indicate that the training process involving policy evaluation and policy improvement steps may be inferior to our proposed training process with a single objective. In the next subsection, we provide a performance examination using the simulation environments from the Omniverse Isaac Gym~\cite{Makoviychuk2021IsaacGH}.

\begin{figure}
\vspace{-1em}
	\begin{minipage}{0.54\linewidth}
	\centering
    \vspace{0.2em}
    \footnotesize
    \includegraphics[width=\linewidth]{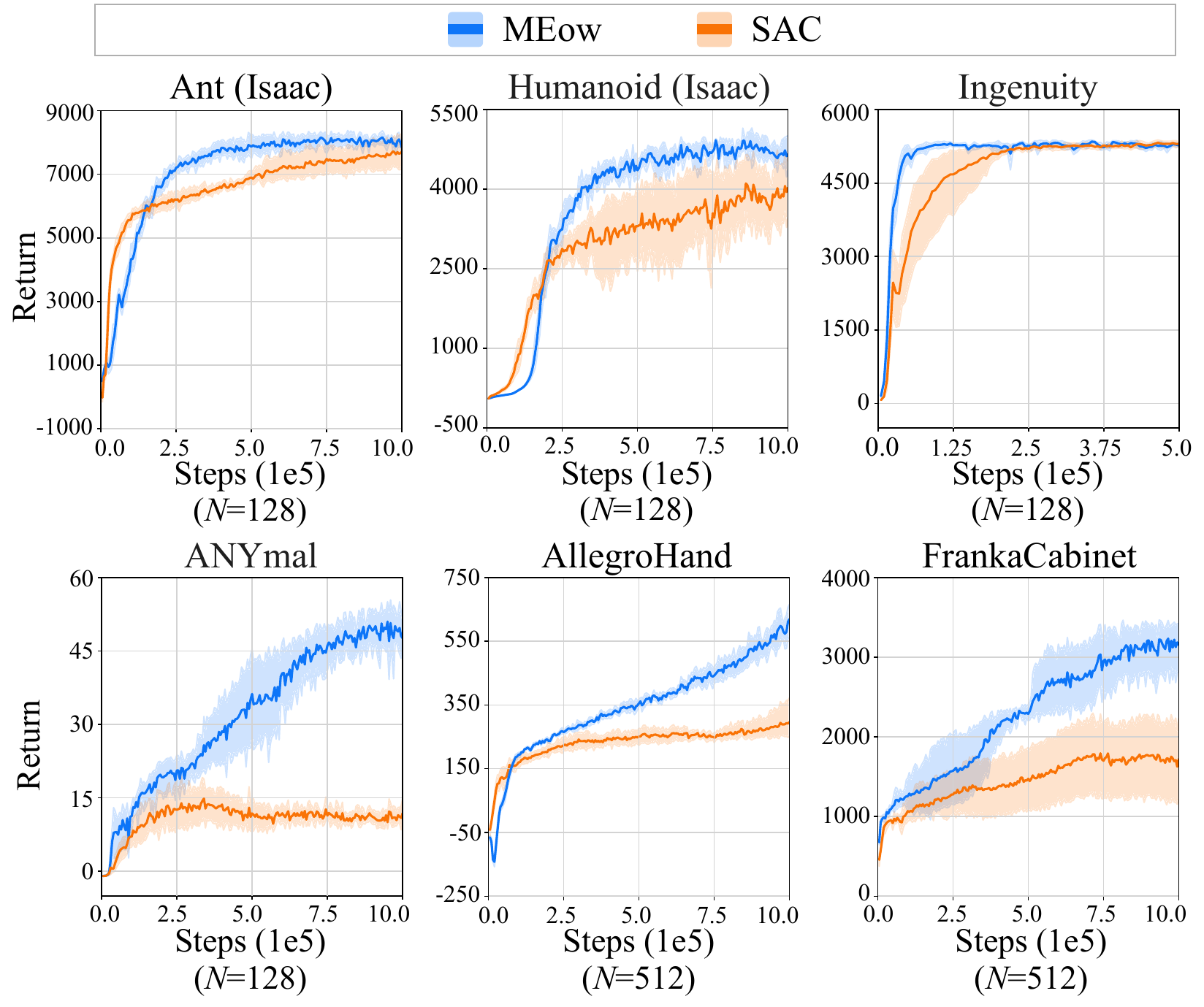}
    \vspace{-2.2em}
    \caption{A comparison on six Isaac Gym environments. Each curve represents the mean performance of five runs, with shaded areas indicating the 95\% confidence intervals. `Steps' in the x-axis represents the number of training steps, each of which consists of $N$ parallelizable interactions with the environments.}
    \label{fig:isaac_curve}
	\end{minipage}\hfill
	\begin{minipage}{0.44\linewidth}
	\centering
    \vspace{-0.2em}
    \footnotesize
    \includegraphics[width=\linewidth]{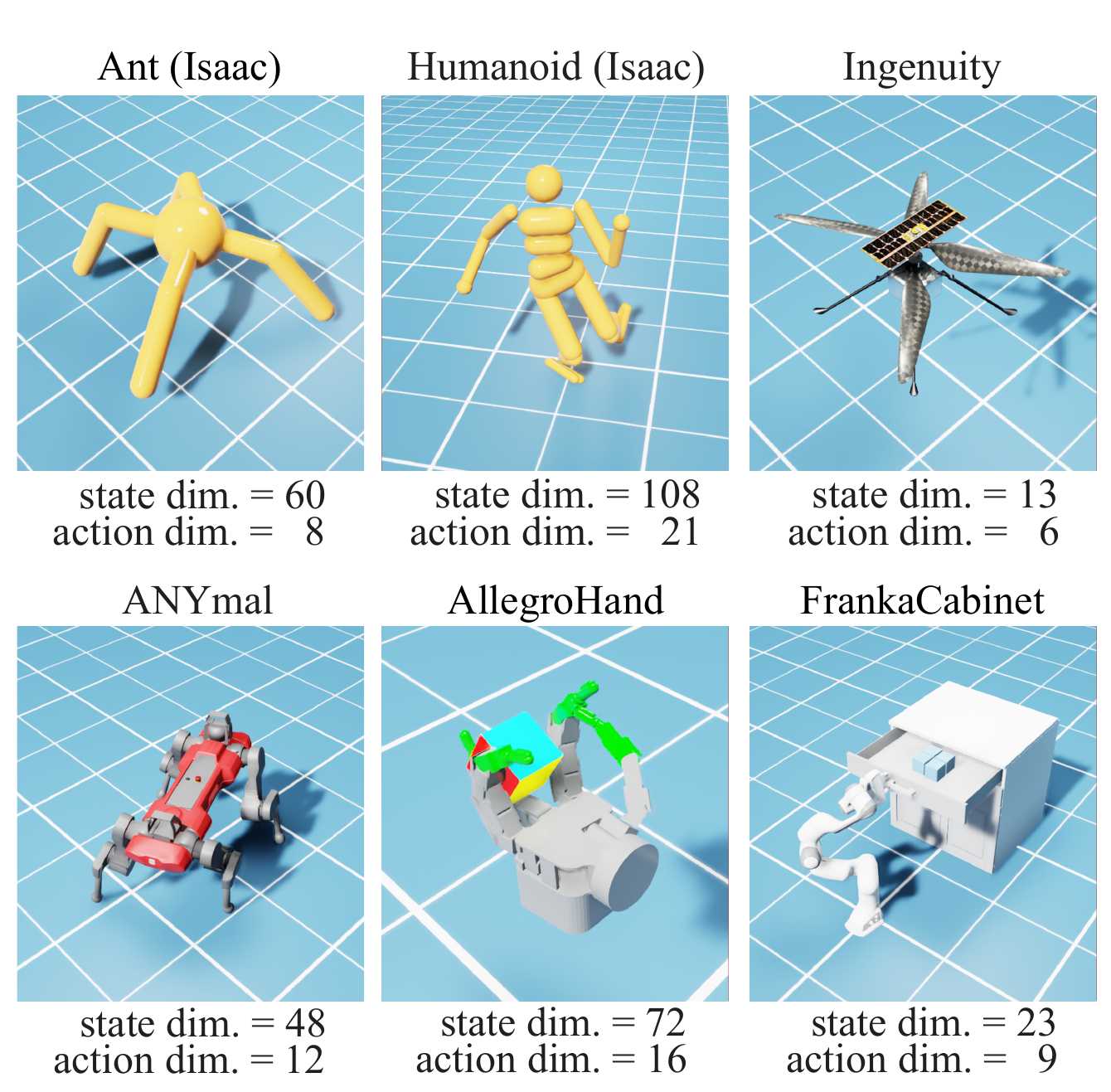}
    \vspace{-1.9em}
    \caption{A demonstration of the six Isaac Gym environments introduced in Section~\ref{sec:experiments:isaac}. The dimensionalities of the state and action for each environment are denoted below each subfigure.}
    \label{fig:isaac_fig}
	\end{minipage}
\end{figure}
\subsection{Performance Comparison on the Omniverse Issac Gym Environments}
\label{sec:experiments:isaac}
In this subsection, we examine the performance of MEow on a variety of robotic tasks simulated by Omniverse Isaac Gym~\cite{Makoviychuk2021IsaacGH}, a GPU-based physics simulation platform. In addition to `Ant' and `Humanoid', we employ four additional tasks: `Ingenuity', `ANYmal', `AllegroHand', and `FrankaCabinet'. All of them are designed based on real-world robotic application scenarios. `Ingenuity' and `ANYmal' are locomotion environments inspired by NASA’s Ingenuity helicopter and ANYbotics' industrial maintenance robots, respectively. On the other hand, `AllegroHand' and `FrankaCabinet' focus on executing specialized manipulative tasks with robotic hands and arms, respectively. A demonstration of these tasks is illustrated in Fig.~\ref{fig:isaac_fig}.

In this experimental comparison, we adopt SAC as a baseline due to its excellent performance in the MuJoCo environments. The evaluation results are presented in Fig.~\ref{fig:isaac_curve}. The results demonstrate that MEow exhibits superior performance on `Ant (Isaac)' and `Humanoid (Isaac)'. In addition, MEow consistently outperforms SAC across the four robotic environments (i.e., `Ingenuity', `ANYmal', `AllegroHand', and `FrankaCabinet'), indicating that our algorithm possesses the ability to perform challenging robotic tasks simulated based on real-world application scenarios. 

\subsection{Ablation Analysis}
\label{sec:experiments:ablation}
In this subsection, we provide an ablation analysis to examine the effectiveness of each technique introduced in Section~\ref{sec:methodology:technique}.

\textbf{Training Techniques.}~Fig.~\ref{fig:mujoco_ablation} compares the performance of three variants of MEow: `MEow (Vanilla)', `MEow (+LRS)', and `MEow (+LRS \& SCDQ)', across five MuJoCo environments. The results show that `MEow (Vanilla)' consistently underperforms, with its total returns demonstrating negligible or no growth throughout the training period. In contrast, the variants incorporating translation functions demonstrate significant performance enhancements. This observation highlights the importance of including $b_\theta$ in the model design. In addition, the comparison between `MEow (+LRS)' and `MEow (+LRS \& SCDQ)' suggests that our reformulated approach to clipped double Q-learning~\cite{Fujimoto2018AddressingFA} improves the final performance by a noticeable margin. 

\begin{figure}[t]
    \centering
    \footnotesize
    \vspace{-0.5em}
    \includegraphics[width=1.0\linewidth]{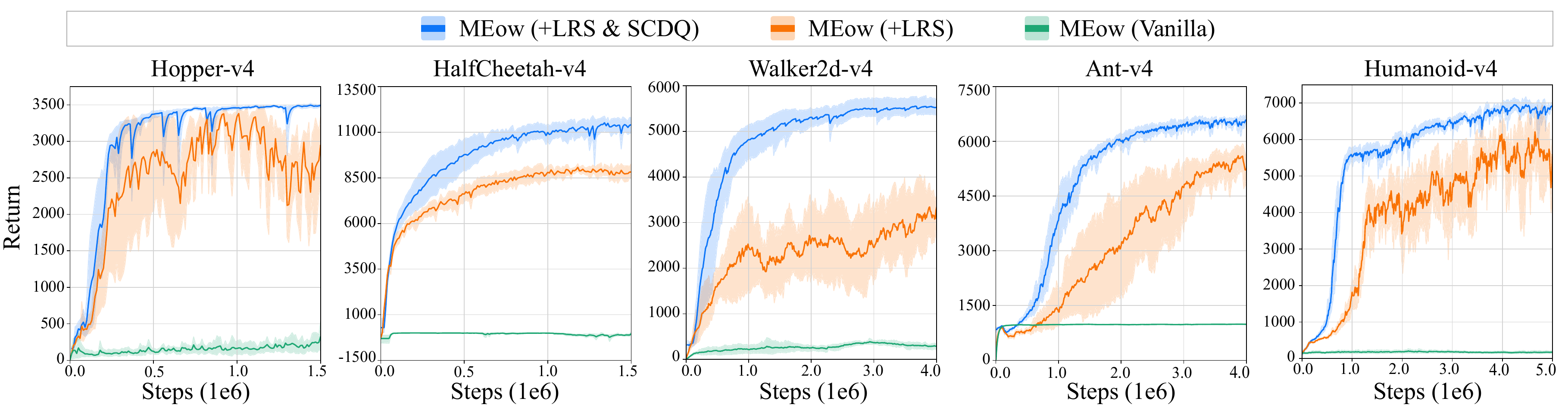}
    \vspace{-1.8em}
    \caption{The performance comparison of MEow's variants (i.e., `MEow (Vanilla)', `MEow (+LRS)', and `MEow (+LRS \& SCDQ)') on five MuJoCo environments. Each curve represents the mean performance of five runs, with shaded areas indicating the 95\% confidence intervals.}
    \label{fig:mujoco_ablation}
\end{figure}
\begin{figure}[t]
    \vspace{-0.5em}
    \centering
    \footnotesize
    \includegraphics[width=1.0\linewidth]{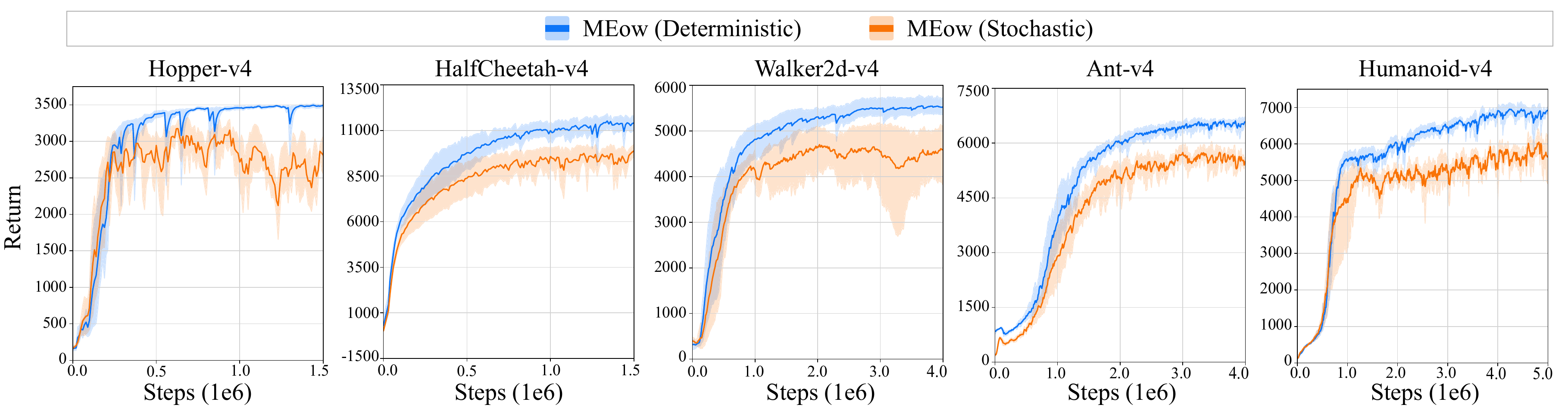}
    \vspace{-1.5em}
    \caption{Performance comparison between MEow with a deterministic policy and MEow with a stochastic policy on five MuJoCo environments. Each curve represents the mean performance of five runs, with shaded areas indicating the 95\% confidence intervals.}
    \label{fig:mujoco_stochastic}
    \vspace{-0.5em}
\end{figure}
\textbf{Inference Technique.}~Fig.~\ref{fig:mujoco_stochastic} compares the performance of two variants of MEow: `MEow (Stochastic)' and `MEow (Deterministic)'. The former samples action based on $\va_t \sim \policy_\theta(\cdot\,|\vs_t)$ while the latter derive action according to $\va_t = \argmax_{\va}\,Q_\theta(\vs_t, \va)=g^{-1}_\theta(\mu|\vs_t)$. As shown in the figure, MEow with a deterministic policy outperforms its stochastic variant, suggesting that a deterministic policy may be more effective for MEow's inference.
\section{Conclusion}
\label{sec:conclusion}
In this paper, we introduce MEow, a unified MaxEnt RL framework that facilitates exact soft value calculations without the need for Monte Carlo estimation. We demonstrate that MEow can be optimized using a single objective function, which streamlines the training process. To further enhance MEow's performance, we incorporate two techniques, learnable reward shifting and shifting-based clipped double Q-learning, into the design. We examine the effectiveness of MEow via experiments conducted in five MoJoCo environments and six robotic tasks simulated by Omniverse Isaac Gym. The results validate the superior performance of MEow compared to existing approaches.

\section*{Limitations and Discussions}
\label{sec:limitation}
As discussed in Section~\ref{sec:methodology:technique}, deterministic policies typically offer better performance compared to their stochastic counterparts. Although our implementation of MEow supports deterministic inference, this capability is based on the assumptions that the Jacobian determinants of the non-linear transformations are constants with respect to their inputs, and that $\argmax_{\vz}\,\prior(\vz)$ can be efficiently derived. These assumptions may not hold for certain types of flow-based models. Therefore, exploring effective architectural choices for MEow represents a promising direction for further investigation.

On the other hand, the training speed of MEow is around $2.3\times$ slower than that of SAC, even though updates according to $\L(\phi)$ are bypassed in MEow. According to our experimental observations, the computational bottleneck of MEow may lie in the inference speed of the flow-based model during interactions with environments. While this speed is significantly faster than many iterative methods, such as MCMC or variational inference, it is still slower compared to the inference speed of Gaussian models. As a result, enhancing the inference speed of flow-based models represents a potential avenue for further improving the training efficiency of MEow.

Finally, our hyperparameter sensitivity analysis, as presented in~\ref{apx:supp_exp:sensitivity}, indicates that our current approach requires different values of $\tau$ to achieve optimal performance. Since hyperparameter tuning often demands significant computational resources, establishing a more generalized parameter setting or developing an automatic tuning mechanism for $\tau$ presents an important direction for future exploration.
\section*{Acknowledgement}
The authors gratefully acknowledge the support from the National Science and Technology Council (NSTC) in Taiwan under grant numbers MOST 111-2223-E-002-011-MY3, NSTC 113-2221-E-002-212-MY3, and NSTC 113-2640-E-002-003. The authors would also like to express their appreciation for the computational resources from NVIDIA Corporation and NVIDIA AI Technology Center (NVAITC) used in this work. Furthermore, the authors extend their gratitude to the National Center for High-Performance Computing (NCHC) for providing the necessary computational and storage resources.

\bibliographystyle{unsrtnat}
\bibliography{references}
\newpage
\appendix
\onecolumn
\setcounter{section}{0}
\setcounter{equation}{0}
\setcounter{figure}{0}
\setcounter{table}{0}

\renewcommand{\thefigure}{A\arabic{figure}}
\renewcommand{\thetable}{A\arabic{table}}
\renewcommand{\theequation}{A\arabic{equation}}

\section{Appendix}
\label{apx}
In this Appendix, we begin with a discussion of the soft value estimation methods used in SQL and SAC in Section~\ref{apx:estimation}. We then derive a number of theoretical properties of MEow in Section~\ref{apx:properties}. 
Next, we discuss the issue of numerical instability in Section~\ref{apx:overflow}. Then, we present additional experimental results in Section~\ref{apx:supp_exp}, and summarize the experimental setups in Section~\ref{apx:setup}. Finally, we elaborate on the potential impacts of this work in Section~\ref{apx:impacts}.
\subsection{The Soft Value Estimation Methods in SAC and SQL}
\label{apx:estimation}

In this section, we elaborate on the soft value estimation methods mentioned in Section~\ref{sec:background:actor_critic}. We first show that $V_\theta(\vs_t)$ approximated using Eq.~(\ref{eq:unbiased}) is greater than that approximated using Eq.~(\ref{eq:biased}) for any given state $\vs_t$ in Proposition~\ref{prop:value_est} and Remark~\ref{remark:value_est}. Then, we discuss their practical implementation.

\begin{proposition}
\label{prop:value_est}
For any $\vs_t \in \sS$ and $\alpha \in \R_{>0}$, the following inequality holds:
\begin{equation}
\label{eq:value_est}
\alpha \log \E_{\va \sim \policy_\phi} \ofmid{\frac{\exp \of {Q_\theta(\vs_t, \va)/ \alpha}}{\policy_\phi (\va | \vs_t)} } \geq \E_{\va \sim \policy_\phi} [Q_\theta(\vs_t, \va)- \alpha \log \policy_\phi(\va|\vs_t)].
\end{equation}
\end{proposition}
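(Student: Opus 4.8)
The plan is to recognize Eq.~(\ref{eq:value_est}) as a direct consequence of Jensen's inequality applied to the convex exponential map. First I would use the hypothesis $\alpha \in \R_{>0}$ to divide both sides by $\alpha$ without reversing the inequality, which reduces the claim to a factored form in which the temperature no longer appears as an outer multiplier. This keeps the subsequent algebra clean and isolates the genuinely probabilistic content of the statement.

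Second, I would introduce the single random variable $f(\va) \triangleq \frac{1}{\alpha} Q_\theta(\vs_t, \va) - \log \policy_\phi(\va|\vs_t)$ and rewrite both sides in terms of it. With this substitution the integrand on the left-hand side becomes $\exp(Q_\theta(\vs_t,\va)/\alpha)/\policy_\phi(\va|\vs_t) = \exp(f(\va))$, while the bracketed expression on the right-hand side is $Q_\theta(\vs_t,\va) - \alpha\log\policy_\phi(\va|\vs_t) = \alpha\, f(\va)$. After dividing by $\alpha$, the target inequality collapses to the compact statement $\log \E_{\va\sim\policy_\phi}\of{\exp(f(\va))} \geq \E_{\va\sim\policy_\phi}\of{f(\va)}$.

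Third, I would invoke Jensen's inequality directly. Since $\exp(\cdot)$ is convex on $\R$, we have $\E_{\va\sim\policy_\phi}\of{\exp(f(\va))} \geq \exp\of{\E_{\va\sim\policy_\phi}\of{f(\va)}}$, and applying the monotone increasing map $\log$ to both sides yields $\log \E_{\va\sim\policy_\phi}\of{\exp(f(\va))} \geq \E_{\va\sim\policy_\phi}\of{f(\va)}$, which is exactly the factored form established in the previous step. Multiplying back through by $\alpha>0$ recovers Eq.~(\ref{eq:value_est}) and completes the argument.

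I do not anticipate a genuine obstacle here, since the entire proof reduces to a one-line Jensen application once the substitution $f$ is made; the difficulty is purely notational rather than conceptual. The only point requiring mild care is that the expectations be well-defined, so I would assume $Q_\theta(\vs_t,\cdot)$ and $\log\policy_\phi(\cdot|\vs_t)$ are integrable against $\policy_\phi(\cdot|\vs_t)$, ensuring $f$ has finite expectation and that the convexity inequality is non-degenerate. This matches the intended reading of the paper, where both the SQL-style estimator in Eq.~(\ref{eq:unbiased}) and the SAC-style estimator in Eq.~(\ref{eq:biased}) are finite quantities, and it clarifies why the former systematically yields the larger estimate.
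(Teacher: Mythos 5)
Your proof is correct and takes essentially the same route as the paper's: both reduce the claim to a single application of Jensen's inequality, the paper invoking the concavity of $\log$ directly on the left-hand expectation, while you equivalently invoke the convexity of $\exp$ applied to $f(\va)=\frac{1}{\alpha}Q_\theta(\vs_t,\va)-\log\policy_\phi(\va|\vs_t)$ and then take logarithms. These two formulations of Jensen are interchangeable, so your argument matches the paper's in substance, with only the cosmetic additions of dividing by $\alpha$ first and the (reasonable) integrability remark.
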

\begin{proof}
\begin{equation*}
\begin{aligned}
\alpha \log \E_{\va \sim \policy_\phi} \ofmid{\frac{\exp \of {Q_\theta(\vs_t, \va)/ \alpha}}{\policy_\phi (\va | \vs_t)} } &\stackrel{(i)}{\geq} \alpha \E_{\va \sim \policy_\phi} \ofmid{\log \of{\frac{\exp \of {Q_\theta(\vs_t, \va)/ \alpha}}{\policy_\phi (\va | \vs_t)} }}\\
&=\alpha \E_{\va \sim \policy_\phi} \ofmid{Q_\theta(\vs_t, \va)/ \alpha - \log \policy_\phi (\va | \vs_t) } \\
&= \E_{\va \sim \policy_\phi} \ofmid{Q_\theta(\vs_t, \va) - \alpha \log \policy_\phi (\va | \vs_t) },
\end{aligned}
\end{equation*}
where $(i)$ is due to Jensen's inequality.
\end{proof}

\begin{remark} 
\label{remark:value_est}
The inequality in Eq.~(\ref{eq:value_est}) preserves after applying the Monte Carlo estimation. Namely,
\begin{equation}
\alpha \log \of{\frac{1}{M} \sum_{i=1}^{M} \frac{\exp \of {Q_\theta(\vs_t, \va^{(i)})/ \alpha}}{\policy_\phi (\va^{(i)} | \vs_t)}} \geq \frac{1}{M} \sum_{i=1}^{M} \of{Q_\theta(\vs_t, \va^{(i)})- \alpha \log \policy_\phi(\va^{(i)}|\vs_t)},
\end{equation}
where $\{\va^{(i)}\}_{i=1}^{M}$ represents a set of samples drawn from $\policy_\phi$.
\end{remark}

Unlike the estimation in Eq.~(\ref{eq:biased}), the estimation in Eq.~(\ref{eq:unbiased}) is guaranteed to converge to $V_\theta(\vs_t)$ as $M\to \infty$. However, empirically, the estimation method in Eq.~(\ref{eq:biased}) is preferred and widely used in the contemporary MaxEnt framework. One potential reason could be the required number of samples needed for effective approximation. According to~\cite{haarnoja2017sql}, $M=32$ is an effective choice for Eq.~(\ref{eq:unbiased}), whereas $M=1$ works well for Eq.~(\ref{eq:biased}), as adopted by many previous works.~\cite{haarnoja2017soft, Haarnoja2018LatentSP, mazoure2019leveraging, Ward2019ImprovingEI, messaoud2024sac}.

\subsection{Theoretical Properties of MEow}
\label{apx:properties}
In this section, we examine a number of key properties of the MEow framework. We begin by presenting a proposition to verify MEow's capability in modeling the soft Q-function and the soft value function. Then, we present a proposition to derive a deterministic policy in MEow. Finally, we offer a discussion of the impact of incorporating learnable reward shifting functions.

\textbf{Proposition 3.1} \textit{Eq.~(\ref{eq:Q_value_function}) satisfies the following statements: (1) Given that the Jacobian of $g_\theta$ is non-singular, $V_\theta (\vs_t)\in \R$ and $Q_\theta(\vs_t, \va_t)\in \R$, $\forall \va_t \in\sA, \forall \vs_t \in \sS$. (2) $V_\theta(\vs_{t}) = \alpha \log \int \exp \of{Q_\theta(\vs_t, \va)/ \alpha} d\va$.}
\begin{proof}
(1) Given that the Jacobian of $g_\theta$ is non-singular, $V_\theta (\vs_t)\in \R$ and $Q_\theta(\vs_t, \va_t)\in \R$, $\forall \va_t \in\sA, \forall \vs_t \in \sS$.

If the Jacobian of $g_\theta$ is non-singular, then both $\prod_{i\in \Sl}|\det (\jacob_{g^i_\theta}(\vs_t))|\in \R_{>0}$ and $\prod_{i\in \Sn}|\det (\jacob_{g^i_\theta}(\va_t^{i-1}|\vs_t))|\in \R_{>0}$. This suggests that $\alpha \log \prod_{i\in \Sl}|\det (\jacob_{g^i_\theta}(\vs_t))|\in \R$ and $\alpha \log \prior \of{g_\theta(\va_t|\vs_t)}\prod_{i \in \Sn}|\det (\jacob_{g^i_\theta}(\va_t^{i-1}|\vs_t))| \in \R$. As a result, $V_\theta(\vs_t)\in \R$ and $Q_\theta(\vs_t, \va_t)\in \R$.

(2) $V_\theta(\vs_{t}) = \alpha \log \int \exp \of{Q_\theta(\vs_t, \va)/ \alpha} d\va$.
\begin{equation*}
\begin{aligned}
&1 = \int \policy(\va| \vs_t) d\va= \int \prior \of{g_\theta(\va|\vs_t)}\prod_{i \in \Sn}\abs{\det \of{\jacob_{g^i_\theta}(\va^{i-1}|\vs_t)}}\prod_{i\in \Sl}\abs{\det (\jacob_{g^i_\theta}(\vs_t))} d\va.\\
\Leftrightarrow &\,\,\,\, \of{\prod_{i\in \Sl}\abs{\det (\jacob_{g^i_\theta}(\vs_t))}}^{-1} = \int \prior \of{g_\theta(\va|\vs_t)}\prod_{i \in \Sn}\abs{\det \of{\jacob_{g^i_\theta}(\va^{i-1}|\vs_t)}}d\va.\\
\Leftrightarrow &\,\,\,\, \alpha \log \of{\prod_{i\in \Sl}\abs{\det (\jacob_{g^i_\theta}(\vs_t))}}^{-1} = \alpha \log \int \prior \of{g_\theta(\va|\vs_t)}\prod_{i \in \Sn}\abs{\det \of{\jacob_{g^i_\theta}(\va^{i-1}|\vs_t)}}d\va.\\
\Leftrightarrow &\,\,\,\, -\alpha \log \prod_{i\in \Sl}\abs{\det (\jacob_{g^i_\theta}(\vs_t))}= \alpha \log \int \prior \of{g_\theta(\va|\vs_t)}\prod_{i \in \Sn}\abs{\det \of{\jacob_{g^i_\theta}(\va^{i-1}|\vs_t)}}d\va.\\
\Leftrightarrow &\,\,\,\, V_\theta(\vs_{t}) = \alpha \log \int \exp \of{Q_\theta(\vs_t, \va)/ \alpha} d\va.
\end{aligned}
\end{equation*}
\end{proof}

In Section~\ref{sec:methodology:technique}, we demonstrate that $\argmax_{\va}\,Q_\theta(\vs_t, \va)$ can be efficiently obtained through $g^{-1}_\theta (\argmax_\vz\,\prior(\vz)|\vs_t)$. To provide theoretical support for this result, we include a proof for Proposition~\ref{prop:max_Q}.

\textbf{Proposition 3.2} \textit{Given that $|\det(\jacob_{g^i_\theta}(\va^{i-1}|\vs_t))|$ is a constant with respect to $\va^{i-1}$, then $g^{-1}_\theta (\argmax_\vz\,\prior(\vz)|\vs_t) = \argmax_{\va}\,Q_\theta(\vs_t, \va)$.}
\begin{proof}
Let $c \triangleq \prod_{i \in \Sn}|\det(\jacob_{g^i_\theta}(\va^{i-1}|\vs_t))|$.
\begin{equation*}
\begin{aligned}
\underset{\va}{\argmax}\,Q_\theta(\vs_t, \va) &= \underset{\va}{\argmax}\,\alpha \log \of{\prior \of{g_\theta(\va|\vs_t)}\prod_{i \in \Sn}\abs{\det \of{\jacob_{g^i_\theta}(\va^{i-1}|\vs_t)}} }\\
&=\underset{\va}{\argmax}\,\alpha \log \of{\prior \of{g_\theta(\va|\vs_t)} c}\\
&\stackrel{(i)}{=}\underset{\va}{\argmax}\,\prior \of{g_\theta(\va|\vs_t)} c\\
&=\underset{\va}{\argmax}\,\prior \of{g_\theta(\va|\vs_t)}\\
&\stackrel{(ii)}{=}\underset{\va}{\argmax}\,\prior \of{\vz}\\
&\stackrel{(iii)}{=}g^{-1}_\theta(\underset{\vz}{\argmax}\,\prior(\vz)|\vs_t),\\
\end{aligned}
\end{equation*}
where $(i)$ is because logarithm is strictly increasing, $(ii)$ is due to $\vz=g_\theta(\va|\vs_t)$, and $(iii)$ is due to $\va=g^{-1}_\theta(\vz|\vs_t)$.
\end{proof}

In Section~\ref{sec:methodology:technique}, we incorporate the learnable reward shifting function $b_\theta$ in $Q_\theta$ and $V_\theta$. This incorporation results in redefined soft Q-function $Q^{b}_\theta$ and soft value function $V^{b}_\theta$. In Proposition~\ref{prop:trans_function}, we verify that $V^b_\theta(\vs_t) = \alpha \log \int \exp(Q^b_\theta(\vs_t, \va)/\alpha) d\va=V_\theta(\vs_{t}) + b_\theta(\vs_t)$.

\begin{proposition}
\label{prop:trans_function} Given that $Q$ and $V$ satisfy $V(\vs_{t}) \triangleq \alpha \log \int \exp \of{Q(\vs_t, \va)/ \alpha} d\va$. The augmented functions, $Q^b(\vs_t, \va_t)= Q(\vs_t, \va_t)+b(\vs_t)$ and $V^b(\vs_t)\triangleq \alpha \log \int \exp(Q^b(\vs_t, \va)/\alpha) d\va$, where $b(\vs_t)$ is the the reward shifting function, satisfy $V^b(\vs_t) = V(\vs_t)+b(\vs_t)$.
\end{proposition}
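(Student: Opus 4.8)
The plan is to start directly from the definition of $V^b$ and substitute the definition $Q^b(\vs_t, \va) = Q(\vs_t, \va) + b(\vs_t)$ into the exponent, then exploit the fact that the shifting term $b(\vs_t)$ depends only on the state $\vs_t$ and is therefore constant with respect to the integration variable $\va$. Concretely, I would write
\begin{equation*}
V^b(\vs_t) = \alpha \log \int \exp \of{\frac{1}{\alpha}\of{Q(\vs_t, \va) + b(\vs_t)}} d\va = \alpha \log \int \exp \of{\frac{Q(\vs_t, \va)}{\alpha}} \exp \of{\frac{b(\vs_t)}{\alpha}} d\va,
\end{equation*}
where the second equality uses the multiplicative property of the exponential.

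The key step is then to factor $\exp(b(\vs_t)/\alpha)$ out of the integral, which is justified because it has no dependence on $\va$. This yields
\begin{equation*}
V^b(\vs_t) = \alpha \log \of{ \exp \of{\frac{b(\vs_t)}{\alpha}} \int \exp \of{\frac{Q(\vs_t, \va)}{\alpha}} d\va }.
\end{equation*}
Finally, I would apply the identity $\log(xy) = \log x + \log y$ to split the logarithm into two terms, observe that $\alpha \log \exp(b(\vs_t)/\alpha) = b(\vs_t)$, and recognize the remaining term $\alpha \log \int \exp(Q(\vs_t, \va)/\alpha) d\va$ as exactly the hypothesized definition of $V(\vs_t)$. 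Combining these gives $V^b(\vs_t) = b(\vs_t) + V(\vs_t)$, as required.

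There is no substantial obstacle in this argument; it is a short manipulation resting entirely on the state-only dependence of $b$. The only point warranting care is making explicit that $b(\vs_t)$ is treated as a constant inside the $\va$-integral, so that the factoring step is valid, and implicitly that the integral $\int \exp(Q(\vs_t,\va)/\alpha)\,d\va$ is finite and positive so that the logarithm is well defined. This positivity follows from the same reasoning underlying Proposition~\ref{prop:ebflow_property}, since the integrand is a (scaled) probability density. I would note this in passing rather than dwell on it, as the statement takes the well-definedness of $V$ as a hypothesis.
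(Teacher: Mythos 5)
Your proposal is correct and follows essentially the same route as the paper's proof: substitute $Q^b = Q + b$ into the definition of $V^b$, factor $\exp(b(\vs_t)/\alpha)$ out of the integral since $b$ depends only on the state, split the logarithm, and simplify. Your added remark about positivity and finiteness of the integral is a minor refinement the paper leaves implicit, but the argument is identical in substance.
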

\begin{proof}
\begin{equation*}
\begin{aligned}
V^b(\vs_{t}) 
&= \alpha \log \int \exp \of{Q^b(\vs_t, \va)/ \alpha} d\va.\\
&= \alpha \log \of{\int \exp \of{\of{Q(\vs_t, \va)+b(\vs_t)}/ \alpha} d\va }\\
&= \alpha \log \of{\exp (b(\vs_t)/\alpha) \int \exp \of{Q(\vs_t, \va)/ \alpha} d\va }\\
&= \alpha \log \int \exp \of{Q(\vs_t, \va)/ \alpha} d\va + \alpha \log \of{\exp (b(\vs_t)/\alpha)} \\
&= \alpha \log \int \exp \of{Q(\vs_t, \va)/ \alpha} d\va + b(\vs_t) \\
&= V(\vs_{t}) + b(\vs_t). \\
\end{aligned}
\end{equation*}
\end{proof}

\subsection{The Issue of Numerical Instability}
\label{apx:overflow} 
In this section, we provide the motivation for employing the learnable reward shifting function described in Section~\ref{sec:methodology:technique}. We show that while $Q_\theta$ and $V_\theta$ defined in Eq.~(\ref{eq:Q_value_function}) have the theoretical capability to learn arbitrary real values (i.e., Proposition~\ref{prop:ebflow_property}), they may experience numerical instability in practice. This instability arises due to the exponential growth of $\prod_{i \in \Sn}|\det (\jacob_{g^i_\theta}(\va_t^{i-1}|\vs_t))|$ and the exponential decay of $\prod_{i \in \Sl}|\det (\jacob_{g^i_\theta}(\vs_t))|$. We first examine the relationship between $V_\theta(\vs_t)$ and $\prod_{i \in \Sl}|\det (\jacob_{g^i_\theta}(\vs_t))|$ according to the following equations:
\begin{equation*}
\begin{aligned}
V_\theta(\vs_t) = -\log \prod_{i \in \Sl} \abs{\det \of{\jacob_{g^i_\theta}(\vs_t)}}\,\,\,\,\Leftrightarrow \,\,\,\,\exp(-V_\theta(\vs_t)) =  \prod_{i \in \Sl} \abs{\det \of{\jacob_{g^i_\theta}(\vs_t)}}.
\end{aligned}
\end{equation*}
The above equation suggests that the value of $\prod_{i \in \Sl}|\det (\jacob_{g^i_\theta}(\vs_t))|$ decreases exponentially with respect to $V_\theta (\vs_t)$, which may lead to numerical instability during training. On the other hand, the relationship between $Q_\theta(\vs_t, \va_t)$ and $\prod_{i \in \Sn}|\det (\jacob_{g^i_\theta}(\va_t^{i-1}|\vs_t))|$ can be expressed according to the following equations:
\begin{equation*}
\begin{aligned}
                & Q_\theta(\vs_t, \va_t) = \log \prior \of{g_\theta(\va_t|\vs_t)} + \log \prod_{i \in \Sn} \abs{\det \of{\jacob_{g^i_\theta}(\va_t^{i-1}|\vs_t)}}.\\
\Leftrightarrow \,\,\,\,& Q_\theta(\vs_t, \va_t) - \log \prior \of{g_\theta(\va_t|\vs_t)} =  \log \prod_{i \in \Sn} \abs{\det \of{\jacob_{g^i_\theta}(\va_t^{i-1}|\vs_t)}}. \\
\Leftrightarrow \,\,\,\,& \exp \of{Q_\theta(\vs_t, \va_t) - \log \prior \of{g_\theta(\va_t|\vs_t)}} = \prod_{i \in \Sn} \abs{\det \of{\jacob_{g^i_\theta}(\va_t^{i-1}|\vs_t)}}.
\end{aligned}
\end{equation*}
The equations indicate that $\prod_{i \in \Sn} |\det (\jacob_{g^i_\theta}(\va_t^{i-1}|\vs_t))|$ increases exponentially with respect to $Q_\theta(\vs_t, \va_t) - \log \prior \of{g_\theta(\va_t|\vs_t)}$. Therefore, increasing $Q_\theta$ may also lead to an exponential growth of $\prod_{i \in \Sn} |\det (\jacob_{g^i_\theta}(\va_t^{i-1}|\vs_t))|$. 

LRS makes our model less susceptible to numerical calculation errors since the learnable reward shifting function $b_\theta$, unlike $Q_\theta$ and $V_\theta$, is not represented in logarithmic scale. Consider a case where \texttt{FP32} precision is in use, `MEow (Vanilla)’ could fail to learn a target $V_{\theta^*}(\vs_t)>38, \forall \vs_t,$ since $\prod_{i \in \Sl} |\det(\jacob_{g^i_\theta}(\vs_t))|=\exp(-V_{\theta^*}(\vs_t))<2^{-126}$ cannot be represented using \texttt{FP32} precision. Therefore, without shifting the reward function, the loss sometimes becomes undefined values, and can lead to ineffective training (e.g., the green lines in Fig.~\ref{fig:mujoco_ablation}). The reward shifting term can be designed as a (state-conditioned) function or a (non-state-conditioned) value. It can also be learnable or non-learnable. All of these designs (i.e.,  $b_\theta (\vs_t)$,  $b (\vs_t)$,  $b_\theta$, and $b$) can be directly applied to MEow since none of them influences the action distribution. Based on our preliminary experiments, we identified that a learnable state-conditioned reward shifting delivers the best performance.


\subsection{Supplementary Experiments}
\label{apx:supp_exp}

In this section, we provide additional experimental results. 
In Section~\ref{apx:supp_exp:additive}, we offer a comparison between MEow with $g_\theta$ modeled using additive coupling layers and that using affine coupling layers. In Section~\ref{apx:supp_exp:param}, we compare the performance of MEow with four distinct types of actor-critic frameworks formulated based on prior works~\cite{haarnoja2017soft, Haarnoja2018LatentSP, mazoure2019leveraging}. In Section~\ref{apx:supp_exp:multimodal}, we provide an example illustrating the ability of flow-based models to represent multi-modal distributions as policies. In Section~\ref{apx:supp_exp:ablation_sac_lrs}, we present a performance comparison between SAC and its variant with LRS.
Finally, in Section~\ref{apx:supp_exp:sensitivity}, we provide a sensitivity examination for the target smoothing parameter.


\subsubsection{Comparison of Additive and Affine Transformations}
\label{apx:supp_exp:additive}
\begin{figure}[t]
    \centering
    \footnotesize
    \includegraphics[width=1.0\linewidth]{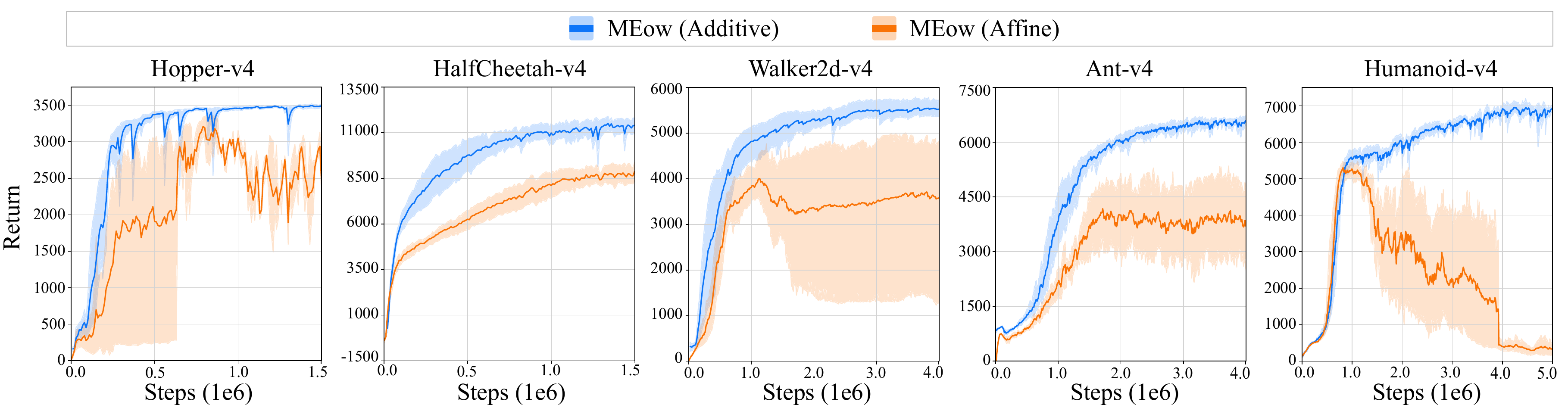}
    \vspace{-1.5em}
    \caption{Performance comparison between MEow with additive coupling transformations in $g_\theta$ and MEow with affine coupling transformations in $g_\theta$ on five MuJoCo environments. Each curve represents the mean performance, with shaded areas indicating the 95\% confidence intervals, derived from five independent runs with different seeds.}
    \label{fig:mujoco_affine}
    \vspace{-0.5em}
\end{figure}
In this section, we evaluate the performance of MEow with two commonly-adopted non-linear transformations, additive~\cite{Dinh2014NICENI} and affine~\cite{Dinh2016DensityEU} coupling layers, for constructing $g_\theta$. The results are presented in Fig.~\ref{fig:mujoco_affine}. The results show that MEow with additive coupling layers achieves better performance than that with affine coupling layers. Based on this observation, we adopt additive coupling layers for constructing $g_\theta$ throughout the experiments in Section~\ref{sec:experiments} of the main manuscript.

\subsubsection{Influences of Parameterization in MaxEnt RL Actor-Critic Frameworks}
\label{apx:supp_exp:param}
\begin{figure}[t]
    \centering
    \footnotesize
    \includegraphics[width=1.0\linewidth]{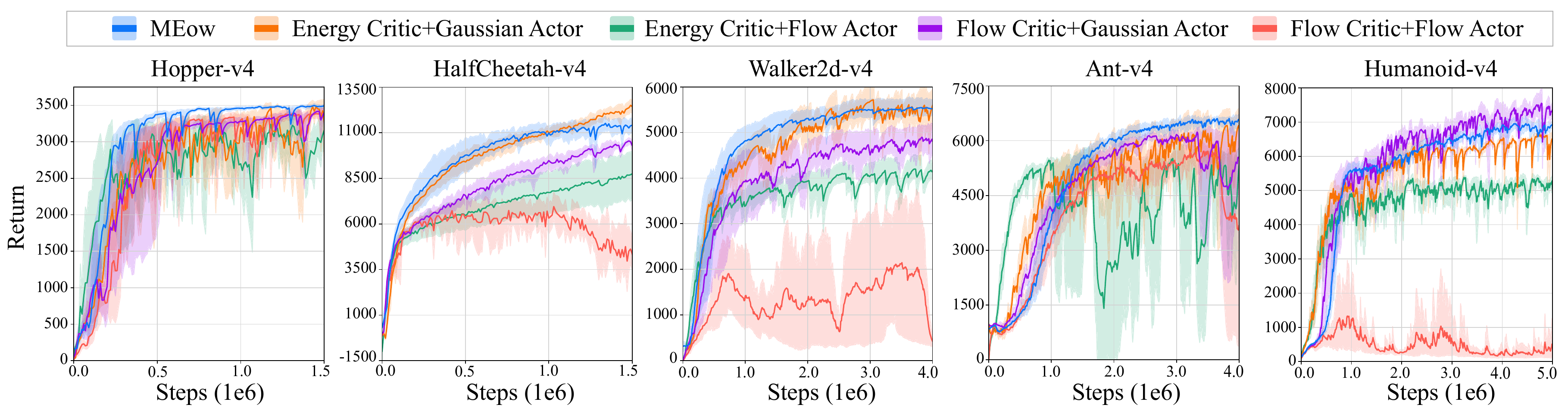}
    \vspace{-1.5em}
    \caption{Performance comparison between `MEow', `Energy Critic+Gaussian Actor' (ECGA), `Energy Critic+Flow Actor' (ECFA), `Flow Critic+Gaussian Actor' (FCGA), and `Flow Critic+Flow Actor' (FCFA) on five MuJoCo environments. Each curve represents the mean performance, with shaded areas indicating the 95\% confidence intervals, derived from five independent runs with different seeds.}
    \label{fig:mujoco_param}
\end{figure}
In this section, we compare the performance of MEow against four different actor-critic frameworks formulated based on prior works~\cite{haarnoja2017soft, Haarnoja2018LatentSP, mazoure2019leveraging}. The first framework is the same as SAC~\cite{haarnoja2017soft}, with the critic modeled as an energy-based model and the actor as a Gaussian. The second framework follows the approaches of~\cite{Haarnoja2018LatentSP, mazoure2019leveraging}, where the critic is also an energy-based model, but the actor is a flow-based model. The third and fourth frameworks both utilize a flow-based model for the critic, with the actor modeled as a Gaussian and a flow-based model, respectively. These frameworks are denoted as: `Energy Critic+Gaussian Actor' (ECGA), `Energy Critic+Flow Actor' (ECFA), `Flow Critic+Gaussian Actor' (FCGA), and `Flow Critic+Flow Actor' (FCFA), respectively. Regarding the soft value calculation during training, the first and second frameworks adopt the value estimation method in SAC (i.e., Eq.~(\ref{eq:biased})). For the third and the fourth frameworks, their training adopts the exact value calculation (i.e., Eq.~(\ref{eq:Q_value_function})), which is the same as MEow. The results are presented in Fig.~\ref{fig:mujoco_param}.

As depicted in Fig.~\ref{fig:mujoco_param}, MEow exhibits superior performance and stability compared to other actor-critic frameworks in the `Hopper-v4', `Ant-v4', and `Walker2d-v4' environments, and shows comparable performance with ECGA in most environments. In addition, the results that compare the frameworks with flow-based models as actors (i.e., ECFA and FCFA) to those with Gaussians as actors (i.e., ECGA and FCGA) suggest that Gaussians are more effective for modeling actors. This finding is similar to that in~\cite{Haarnoja2018LatentSP}. On the other hand, the comparisons between ECGA and FCGA, and between FCGA and FCFA, do not show a clear trend. These findings suggest that both flow-based and energy-based models can be suitable for modeling the soft Q-function. Furthermore, the comparison between FCFA and MEow reveals that the training process involving alternating policy evaluation and improvement steps may be inferior to our proposed training process with a single objective.

\subsubsection{Modeling Multi-Modal Distributions using Flow-based Models}
\label{apx:supp_exp:multimodal}
\begin{wrapfigure}{r}{0.52\linewidth}
    \centering
    \vspace{-1.6em}
    \footnotesize
    \includegraphics[width=\linewidth]{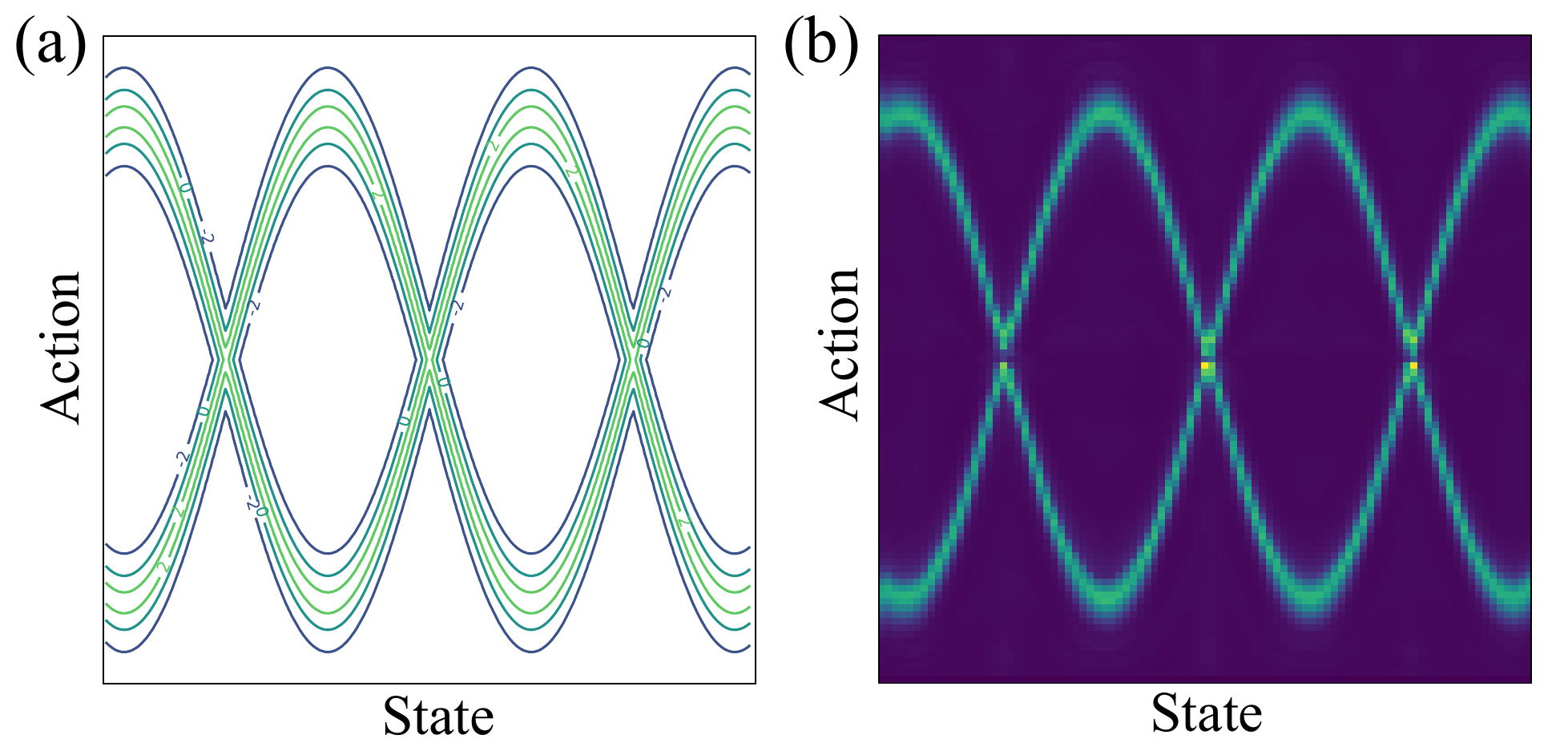}
    \vspace{-2em}
    \caption{(a) The reward landscape of the one-step environment described in Section~\ref{apx:supp_exp:multimodal}. (b) The conditional pdf prediction using an NSF model. }
    \vspace{-1.7em}
    \label{fig:doublehelix}
\end{wrapfigure}

In this section, we use a one-dimensional example to demonstrate that flow-based models are capable of learning multi-modal action distributions. We employ a state-conditioned neural spline flow (NSF)~\cite{Durkan2019NeuralSF} as the model, and train it in a single-step environment with one-dimensional state and action spaces. Fig.~\ref{fig:doublehelix}~(a) illustrates the reward landscape with the state and action denoted on the x-axis and y-axis, respectively. Fig.~\ref{fig:doublehelix}~(b) illustrates the probability density function (pdf) predicted by the model. The result demonstrates the capability of flow-based models to effectively learn multi-modal distributions.

\subsubsection{Applying Learnable Reward Shifting to SAC}
\label{apx:supp_exp:ablation_sac_lrs}
\begin{figure}[t]
    \centering
    \footnotesize
    \includegraphics[width=1.0\linewidth]{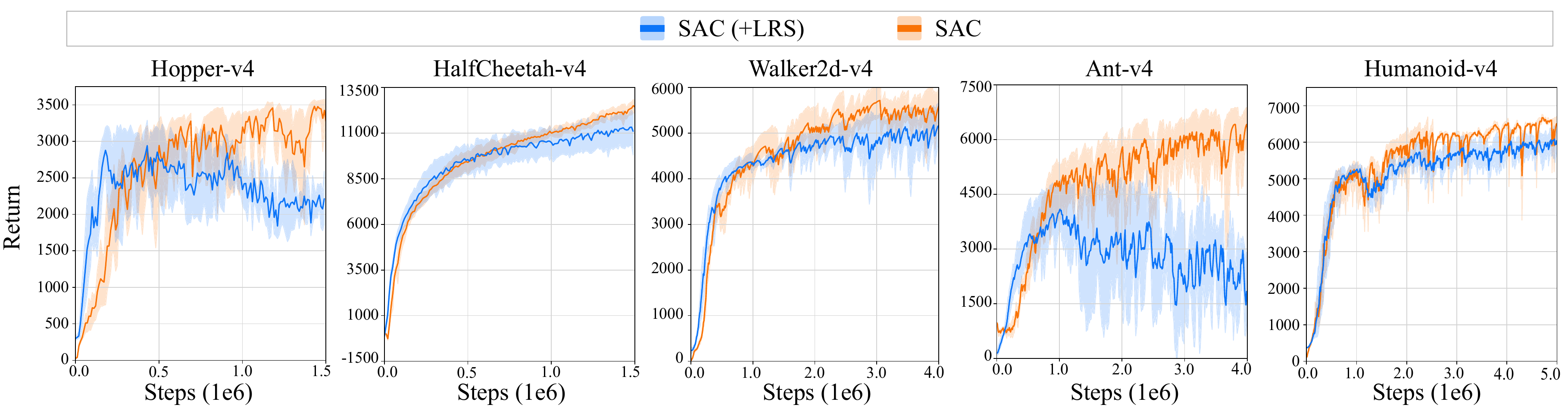}
    \vspace{-1.5em}
    \caption{Performance comparison between `SAC' and `SAC (+LRS)'. Each curve represents the mean performance, with shaded areas indicating the 95\% confidence intervals, derived from five independent runs with different seeds.}
    \label{fig:sac_lrs}
\end{figure}
In this section, we examine the performance of SAC with the proposed LRS technique. Since the original implementation of SAC involves the clipped double Q-Learning technique, SAC with LRS is equivalent to SAC with the shifting-based clipped double Q-Learning (SCDQ) technique discussed in Section~\ref{sec:methodology:technique} of the main manuscript. The performance of `SAC' and `SAC (+LRS)' is presented in Fig.~\ref{fig:sac_lrs}. The results indicate that applying LRS does not improve SAC's performance. Therefore, for a fair evaluation, the original implementation of SAC is adopted in the comparison in Section~\ref{sec:experiments} of the main manuscript.

\begin{figure}[t]
    \centering
    \footnotesize
    \includegraphics[width=1.0\linewidth]{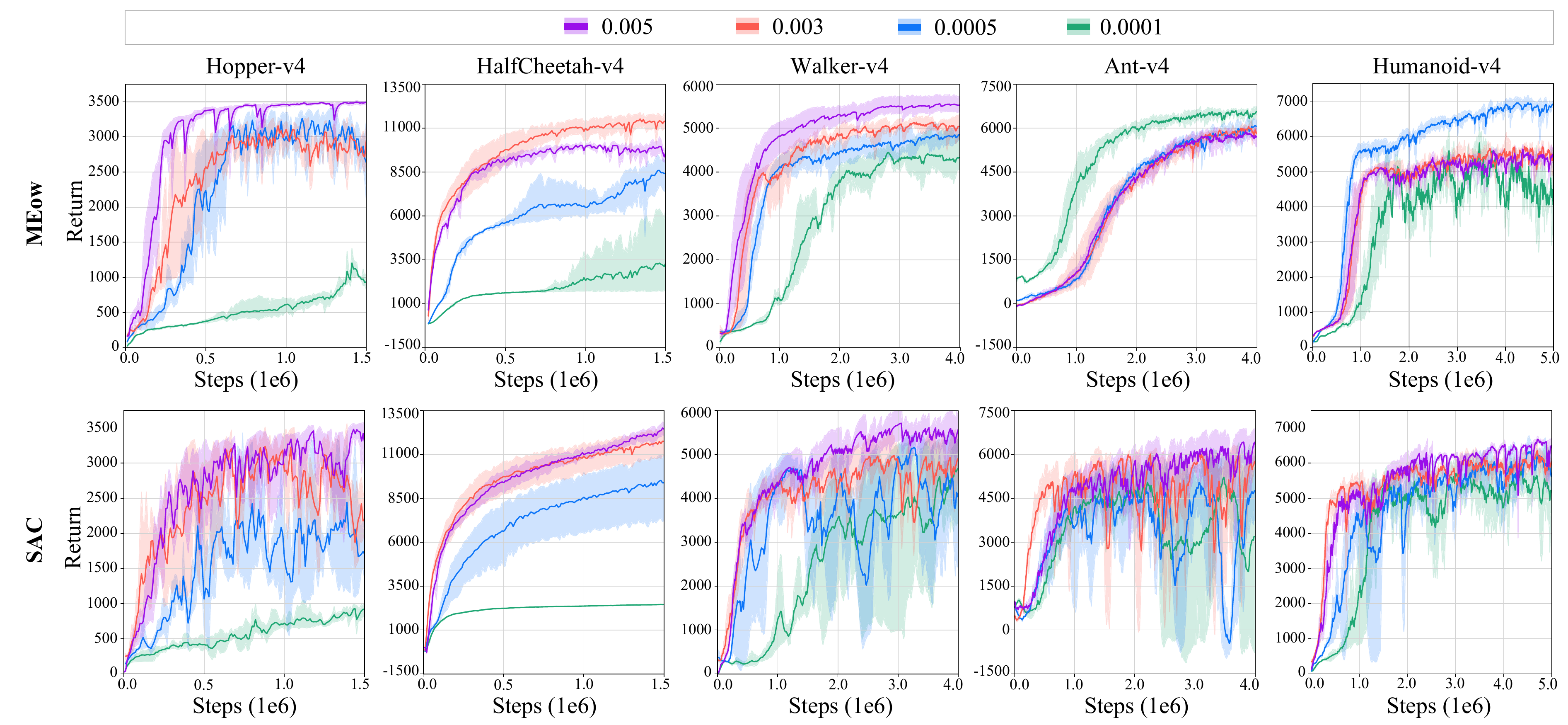}
    \vspace{-1.7em}
    \caption{A performance comparison between MEow and SAC under different trained $\tau$ on five MuJoCo environments. Each curve represents the mean performance, with shaded areas indicating the 95\% confidence intervals, derived from five independent runs with different seeds.}
    \label{fig:mujoco_tau}
    \vspace{-0.5em}
\end{figure}
\subsubsection{Sensitivity Examination for the Target Smoothing Parameter}
\label{apx:supp_exp:sensitivity}
In this section, we provide a performance comparison of SAC and MEow trained with different target smoothing parameter values (i.e., $\tau=0.005$, $0.003$, $0.0005$, and $0.0001$). The results shown in Fig.~\ref{fig:mujoco_tau} indicate that SAC performs the best when $\tau=0.005$, while MEow requires different $\tau$ values to achieve good performance across different tasks. Although both algorithms exhibit significant performance variations with different $\tau$ values, SAC demonstrates a more consistent trend in terms of the total returns among the tested values of $\tau$.
\subsection{Experimental Setups}
\label{apx:setup}
In this section, we elaborate on the experimental configurations and provide the detailed hyperparameter setups for the experiments presented in Section~\ref{sec:experiments} of the main manuscript. The code is implemented using PyTorch~\cite{Paszke2019PyTorchAI} and is available in the following repository:~\url{https://github.com/ChienFeng-hub/meow}.

\subsubsection{Model Architecture}
\label{apx:setup:architecture}
Among all experiments presented in Section~\ref{sec:experiments} of the main manuscript, we maintain the same model architecture, while adjusting inputs and outputs according to the state space and action space for each environment. An illustration of this architecture is presented in Fig.~\ref{fig:architecture}. The model architecture comprises three main components: (I) normalizing flow, (II) hypernetwork, and (III) reward shifting function. For the first component, the transformation $g_\theta$ includes four additive coupling layers~\cite{Dinh2014NICENI} followed by an element-wise linear layer. The prior distribution $\prior$ is modeled as a unit Gaussian. For the second component, the hypernetwork involves two types of multi-layer perceptrons (MLPs), labeled as (a) and (b) in Fig.~\ref{fig:architecture}, which produce weights for the non-linear and linear transformations, respectively. Both MLPs employ swish activation functions~\cite{ramachandran2017searching} and have a hidden layer size of $64$. The MLPs labeled as (a) incorporate layer normalization~\cite{Ba2016LayerN} and a dropout layer~\cite{Hinton2012ImprovingNN} with a dropout rate of $0.1$. For the third component, the reward shifting functions (i.e., $b^{(1)}_\theta$ and $b^{(2)}_\theta$) are implemented using MLPs with swish activation and a hidden layer size of $256$. The parameters used in these components are collectively referred to as $\theta$, and are optimized using the same objective function $\L(\theta)$ defined in Eq.~(\ref{eq:sql_loss}), with the soft Q-function and the soft value function replaced by $Q^b_\theta$ and $V^b_\theta$, respectively. Please note that, for the sake of notational simplicity and conciseness, the parameters of each network are all represented using $\theta$ instead of distinct symbols (e.g., $\theta_{\text{(II)-(a)}}$, $\theta_{\text{(II)-(b)}}$, $\theta_{\text{(III)-(1)}}$, and $\theta_{\text{(III)-(2)}}$).
\begin{figure}[t]
    \centering
    \footnotesize
    \includegraphics[width=1.0\linewidth]{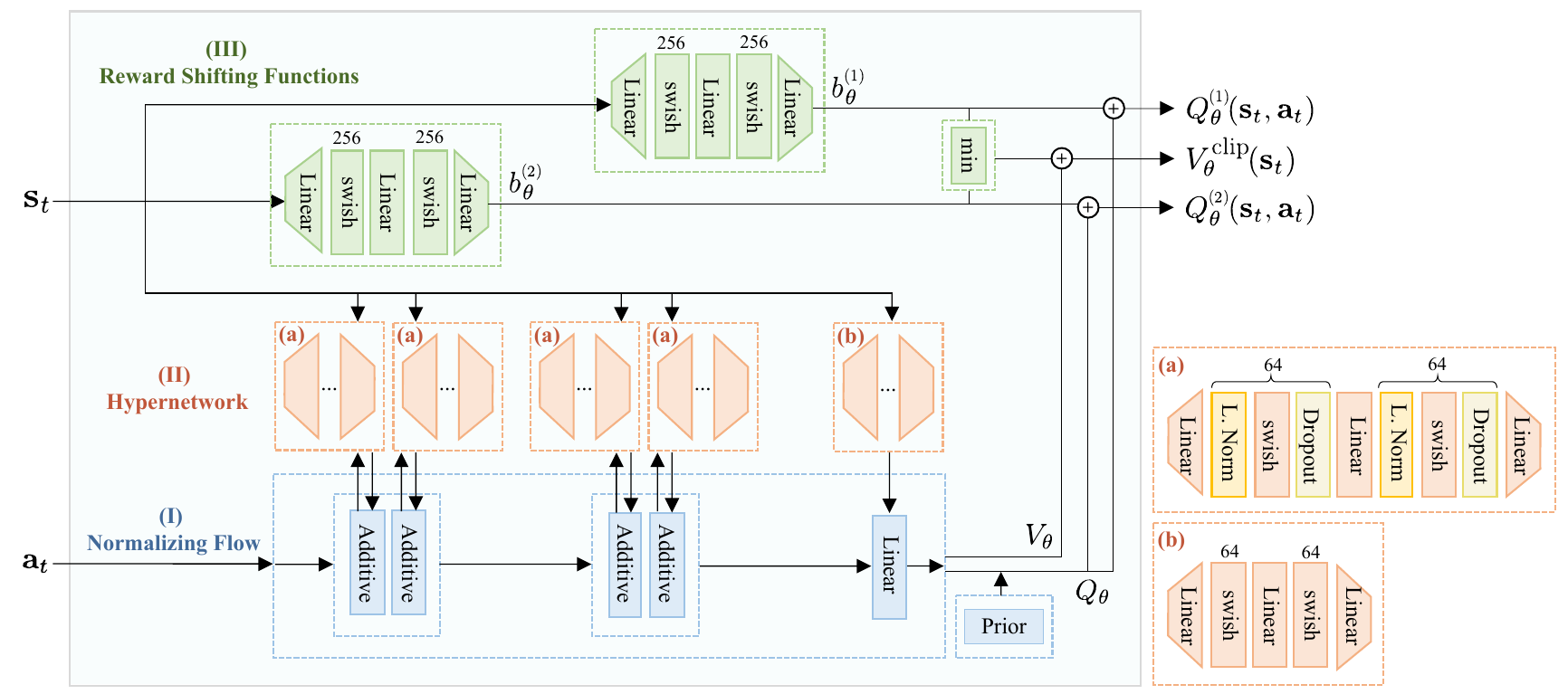}
    \vspace{-1.5em}
    \caption{The architecture adopted in MEow. This architecture consists of three primary components: (I) normalizing flow, (II) hypernetwork, and (III) reward shifting function. The hypernetwork includes two distinct types of networks, labeled as (a) and (b), which are responsible for generating weights for the non-linear and linear transformations within the normalizing flow, respectively. Layer normalization is denoted as `L. Norm' in (a).}
    \label{fig:architecture}
\end{figure}

\subsubsection{Experiments on the Multi-Goal Environment}
\label{apx:setup:multigoal}
The experiments in Section~\ref{sec:experiments:multigoal} are performed on a two-dimensional multi-goal environment~\cite{haarnoja2017sql}. This environment consists of four goals positioned at $[0, 5]$, $[0, -5]$, $[5, 0]$, and $[-5, 0]$, denoted as $\vg_1$, $\vg_2$, $\vg_3$, and $\vg_4$, respectively. The reward is the sum of two components, $r_1(\vs_t)$ and $r_2(\va_t)$, which are formulated as follow:
\begin{equation}
\label{eq:reward}
    r_1(\vs_t) = \underset{i}{\max} -\norm{ \vs_t - \vg_i} \text{ and } r_2(\va_t) = -30\times \norm{\va_t}.
\end{equation}
According to Eq.~(\ref{eq:reward}), $r_1(\vs_t)$ encourages policies to reach states near the goals. On the other hand, $r_2(\va_t)$ encourages policies to produce actions with small magnitudes.

In this experiment, we adopt a temperature parameter $\alpha = 2.5$, a target smoothing factor $\smooth = 0.0005$, a learning rate $\beta = 0.001$, a discount factor $\gamma = 0.9$, and a total of $4,000$ training steps. The computation was carried out on NVIDIA TITAN V GPUs equipped with 12GB of memory. The training takes approximately four minutes.

\subsubsection{Experiments on the MuJoCo Environments}
\label{apx:setup:mujoco}

\textbf{Software and Hardware Setups.}~For the experiments on the MuJoCo environments, our implementation is built on CleanRL~\cite{huang2022cleanrl}, with the normalizing flow component adapted from~\cite{Stimper2023}. The computation was carried out on NVIDIA V100 GPUs equipped with 16GB of memory. The training takes approximately 13 hours per 1 million steps, with each GPU capable of executing four training sessions simultaneously. 

\textbf{Hyperparameter Setups.}~The shared and the environment-specific hyperparameters of MEow are summarized in Tables~\ref{tab:hyperparams}~and~\ref{tab:env_hyperparams}, respectively. The hyperparamers for the baseline methods are directly borrowed from Stable Baseline 3 (SB3)~\cite{stable-baselines3}.


\begin{table}
\renewcommand{\arraystretch}{1.3}
\newcommand{\boldline}{\midrule[1.3pt]}
\newcommand{\thline}{\midrule[0.3pt]}
	\begin{minipage}{0.48\linewidth}
	\centering
    \caption{Shared hyperparameters of MEow.}
    \label{tab:hyperparams}
    \vspace{-0.5em}
    \footnotesize
    \begin{tabular}{cr}
    \boldline
    Parameter                                 & Value                \\ \thline
    optimizer                                 & Adam~\cite{KingBa15} \\
    learning rate ($\beta$)                   & $0.001$               \\
    gradient clip value                       & $30$                 \\
    discount ($\gamma$)                       & $0.99$               \\
    buffer size                               & $10^{6}$             \\
    \boldline
    \end{tabular}
	\end{minipage}\hfill
	\begin{minipage}{0.48\linewidth}
	\centering
    \caption{Shared hyperparameters of SAC.}
    \label{tab:hyperparams_sac}
    \vspace{-0.5em}
    \footnotesize
    \begin{tabular}{cr}
    \boldline
    Parameter                                 & Value                \\ \thline
    optimizer                                 & Adam~\cite{KingBa15} \\
    learning rate ($\beta$)                   & $0.0003$             \\
    gradient clip value                       & -                 \\
    discount ($\gamma$)                       & $0.99$               \\
    buffer size                               & $10^{6}$             \\
    \boldline
    \end{tabular}
	\end{minipage}
\end{table}
\begin{table}[t]
\renewcommand{\arraystretch}{1.2}
\newcommand{\boldline}{\midrule[1.3pt]}
\newcommand{\thline}{\midrule[0.3pt]}
\centering
\caption{A list of environment-specific hyperparameters used in MEow.}
\label{tab:env_hyperparams}
\vspace{-0.5em}
\footnotesize
\begin{tabular}{ccrr}
\boldline
& Environment    & \multicolumn{1}{l}{Target Smoothing Parameter ($\smooth$)} & \multicolumn{1}{l}{Temperature Parameter ($\alpha$)} \\ \thline
\multirow{5}{*}{MuJoCo} & Hopper-v4      & 0.005                      & 0.25                    \\
& HalfCheetah-v4 & 0.003                      & 0.25                     \\
& Walker2d-v4    & 0.005                      & 0.1                      \\
& Ant-v4         & 0.0001                     & 0.05                     \\
& Humanoid-v4    & 0.0005                     & 0.125                    \\ \thline
\multirow{6}{*}{Omniverse Isaac Gym} & Ant    & 0.0005                    & 0.075                     \\
& Humanoid       & 0.00025                   & 0.25                       \\
& Ingenuity      & 0.0025                    & 0.025                      \\
& ANYmal         & 0.025                     & 0.00075                    \\
& AllegroHand    & 0.001                     & 0.1                        \\
& FrankaCabinet  & 0.075                     & 0.1                        \\
\boldline
\end{tabular}
\end{table}


\begin{table}[t]
\renewcommand{\arraystretch}{1.2}
\newcommand{\boldline}{\midrule[1.3pt]}
\newcommand{\thline}{\midrule[0.3pt]}
\centering
\caption{A list of environment-specific hyperparameters used in SAC.}
\label{tab:env_hyperparams_sac}
\vspace{-0.5em}
\footnotesize
\begin{tabular}{ccrr}
\boldline
& Environment    & \multicolumn{1}{l}{Target Smoothing Parameter ($\smooth$)} & \multicolumn{1}{l}{Temperature Parameter ($\alpha$)} \\ \thline
\multirow{6}{*}{Omniverse Isaac Gym} & Ant    & 0.0025                    & 0.4                     \\
& Humanoid       & 0.0025                   & 0.025                       \\
& Ingenuity      & 0.0025                    & 0.1                      \\
& ANYmal         & 0.0025                     & 0.01                    \\
& AllegroHand    & 0.0025                     & 0.1                        \\
& FrankaCabinet  & 0.025                     & 0.1                        \\
\boldline
\end{tabular}
\end{table}

\subsubsection{Experiments on the Omniverse Isaac Gym Environments}
\label{apx:setup:isaac}

\textbf{Software and Hardware Setups.}~For the experiments performed on Omniverse Isaac Gym, the implementation is built on SKRL~\cite{serrano2023skrl} due to its compatibility with Omniverse Issac Gym~\cite{Makoviychuk2021IsaacGH}. The computation was carried out on NVIDIA L40 GPUs equipped with 48GB of memory. The training takes approximately 22 hours per 1 million training steps, with each GPU capable of executing three training sessions simultaneously. For `Ant', `Humanoid', `Ingenuity', and `ANYmal', each training step consists of 128 parallelizable interactions with the environments. For `AllegroHand' and `FrankaCabinet', each training step consists of 512 parallelizable interactions with the environments. 

\textbf{Hyperparameter Setups.}~The shared and the environment-specific hyperparameters of MEow are summarized in Tables~\ref{tab:hyperparams}~and~\ref{tab:env_hyperparams}, respectively. Those of SAC are summarized in Tables~\ref{tab:hyperparams_sac}~and~\ref{tab:env_hyperparams_sac}, respectively. Both SAC and MEow were tuned using the same search space for $\tau$ and $\alpha$ to ensure a fair comparison. Specifically, a grid search was conducted with $\tau$ values ranging from 0.1 to 0.00025 and $\alpha$ values from 0.8 to 0.0005 for both algorithms. The setups with the highest average return were selected for each environment.


\subsection{Broader Impacts}
\label{apx:impacts}
This work represents a new research direction for MaxEnt RL. It discusses a unified method that can be trained using a single objective function and can avoid Monte Carlo estimation in the calculation of the soft value function, which addresses two issues in the existing MaxEnt RL methods. From a practical perspective, our experiments demonstrate that MEow can achieve superior performance compared to widely adopted representative baselines. In addition, the experimental results conducted in the Omniverse Isaac environments show that our framework can perform robotic tasks simulated based on real-world application scenarios. These results indicate the potential for deploying MEow in real robotic tasks. Given MEow's potential to be extended to perform challenging tasks, it is unlikely to have negative impacts on society.
\newpage
\section*{NeurIPS Paper Checklist}
\begin{enumerate}

\item {\bf Claims}
    \item[] Question: Do the main claims made in the abstract and introduction accurately reflect the paper's contributions and scope?
    \item[] Answer: \answerYes{} 
    \item[] Justification: The main claims in the abstract and Section~\ref{sec:introduction} accurately reflect the contributions of this paper. The theoretical results are discussed in Section~\ref{sec:methodology}. The experiments in Section~\ref{sec:experiments} provide empirical justification for these claims. Finally, Section~\ref{sec:conclusion} summarizes both the theoretical and empirical contributions of this work. 
    \item[] Guidelines:
    \begin{itemize}
        \item The answer NA means that the abstract and introduction do not include the claims made in the paper.
        \item The abstract and/or introduction should clearly state the claims made, including the contributions made in the paper and important assumptions and limitations. A No or NA answer to this question will not be perceived well by the reviewers. 
        \item The claims made should match theoretical and experimental results, and reflect how much the results can be expected to generalize to other settings. 
        \item It is fine to include aspirational goals as motivation as long as it is clear that these goals are not attained by the paper. 
    \end{itemize}

\item {\bf Limitations}
    \item[] Question: Does the paper discuss the limitations of the work performed by the authors?
    \item[] Answer: \answerYes{} 
    \item[] Justification: The limitations of this work are discussed in the main manuscript. The discussion covers the assumptions made in this paper and the computational efficiency of the proposed framework.
    \item[] Guidelines:
    \begin{itemize}
        \item The answer NA means that the paper has no limitation while the answer No means that the paper has limitations, but those are not discussed in the paper. 
        \item The authors are encouraged to create a separate "Limitations" section in their paper.
        \item The paper should point out any strong assumptions and how robust the results are to violations of these assumptions (e.g., independence assumptions, noiseless settings, model well-specification, asymptotic approximations only holding locally). The authors should reflect on how these assumptions might be violated in practice and what the implications would be.
        \item The authors should reflect on the scope of the claims made, e.g., if the approach was only tested on a few datasets or with a few runs. In general, empirical results often depend on implicit assumptions, which should be articulated.
        \item The authors should reflect on the factors that influence the performance of the approach. For example, a facial recognition algorithm may perform poorly when image resolution is low or images are taken in low lighting. Or a speech-to-text system might not be used reliably to provide closed captions for online lectures because it fails to handle technical jargon.
        \item The authors should discuss the computational efficiency of the proposed algorithms and how they scale with dataset size.
        \item If applicable, the authors should discuss possible limitations of their approach to address problems of privacy and fairness.
        \item While the authors might fear that complete honesty about limitations might be used by reviewers as grounds for rejection, a worse outcome might be that reviewers discover limitations that aren't acknowledged in the paper. The authors should use their best judgment and recognize that individual actions in favor of transparency play an important role in developing norms that preserve the integrity of the community. Reviewers will be specifically instructed to not penalize honesty concerning limitations.
    \end{itemize}

\item {\bf Theory Assumptions and Proofs}
    \item[] Question: For each theoretical result, does the paper provide the full set of assumptions and a complete (and correct) proof?
    \item[] Answer: \answerYes{} 
    \item[] Justification: The assumptions and the proofs of our theoretical results are presented in detail in Appendices~\ref{apx:estimation}$\sim$\ref{apx:overflow}.
    \item[] Guidelines:
    \begin{itemize}
        \item The answer NA means that the paper does not include theoretical results. 
        \item All the theorems, formulas, and proofs in the paper should be numbered and cross-referenced.
        \item All assumptions should be clearly stated or referenced in the statement of any theorems.
        \item The proofs can either appear in the main paper or the supplemental material, but if they appear in the supplemental material, the authors are encouraged to provide a short proof sketch to provide intuition. 
        \item Inversely, any informal proof provided in the core of the paper should be complemented by formal proofs provided in appendix or supplemental material.
        \item Theorems and Lemmas that the proof relies upon should be properly referenced. 
    \end{itemize}

    \item {\bf Experimental Result Reproducibility}
    \item[] Question: Does the paper fully disclose all the information needed to reproduce the main experimental results of the paper to the extent that it affects the main claims and/or conclusions of the paper (regardless of whether the code and data are provided or not)?
    \item[] Answer: \answerYes{} 
    \item[] Justification: This paper fully disclose all the information needed to reproduce the experimental results. The experimental configurations, detailed hyperparameter setups, and hardware requirements for the experiments presented in this paper are elaborated in Appendix~\ref{apx:setup}.
    \item[] Guidelines:
    \begin{itemize}
        \item The answer NA means that the paper does not include experiments.
        \item If the paper includes experiments, a No answer to this question will not be perceived well by the reviewers: Making the paper reproducible is important, regardless of whether the code and data are provided or not.
        \item If the contribution is a dataset and/or model, the authors should describe the steps taken to make their results reproducible or verifiable. 
        \item Depending on the contribution, reproducibility can be accomplished in various ways. For example, if the contribution is a novel architecture, describing the architecture fully might suffice, or if the contribution is a specific model and empirical evaluation, it may be necessary to either make it possible for others to replicate the model with the same dataset, or provide access to the model. In general. releasing code and data is often one good way to accomplish this, but reproducibility can also be provided via detailed instructions for how to replicate the results, access to a hosted model (e.g., in the case of a large language model), releasing of a model checkpoint, or other means that are appropriate to the research performed.
        \item While NeurIPS does not require releasing code, the conference does require all submissions to provide some reasonable avenue for reproducibility, which may depend on the nature of the contribution. For example
        \begin{enumerate}
            \item If the contribution is primarily a new algorithm, the paper should make it clear how to reproduce that algorithm.
            \item If the contribution is primarily a new model architecture, the paper should describe the architecture clearly and fully.
            \item If the contribution is a new model (e.g., a large language model), then there should either be a way to access this model for reproducing the results or a way to reproduce the model (e.g., with an open-source dataset or instructions for how to construct the dataset).
            \item We recognize that reproducibility may be tricky in some cases, in which case authors are welcome to describe the particular way they provide for reproducibility. In the case of closed-source models, it may be that access to the model is limited in some way (e.g., to registered users), but it should be possible for other researchers to have some path to reproducing or verifying the results.
        \end{enumerate}
    \end{itemize}

\item {\bf Open access to data and code}
    \item[] Question: Does the paper provide open access to the data and code, with sufficient instructions to faithfully reproduce the main experimental results, as described in supplemental material?
    \item[] Answer: \answerYes{} 
    \item[] Justification: The code and installation instructions are available in an anonymous repository, with the link provided in Appendix~\ref{apx:setup}.
    \item[] Guidelines:
    \begin{itemize}
        \item The answer NA means that paper does not include experiments requiring code.
        \item Please see the NeurIPS code and data submission guidelines (\url{https://nips.cc/public/guides/CodeSubmissionPolicy}) for more details.
        \item While we encourage the release of code and data, we understand that this might not be possible, so “No” is an acceptable answer. Papers cannot be rejected simply for not including code, unless this is central to the contribution (e.g., for a new open-source benchmark).
        \item The instructions should contain the exact command and environment needed to run to reproduce the results. See the NeurIPS code and data submission guidelines (\url{https://nips.cc/public/guides/CodeSubmissionPolicy}) for more details.
        \item The authors should provide instructions on data access and preparation, including how to access the raw data, preprocessed data, intermediate data, and generated data, etc.
        \item The authors should provide scripts to reproduce all experimental results for the new proposed method and baselines. If only a subset of experiments are reproducible, they should state which ones are omitted from the script and why.
        \item At submission time, to preserve anonymity, the authors should release anonymized versions (if applicable).
        \item Providing as much information as possible in supplemental material (appended to the paper) is recommended, but including URLs to data and code is permitted.
    \end{itemize}

\item {\bf Experimental Setting/Details}
    \item[] Question: Does the paper specify all the training and test details (e.g., data splits, hyperparameters, how they were chosen, type of optimizer, etc.) necessary to understand the results?
    \item[] Answer: \answerYes{} 
    \item[] Justification: The experimental configurations, detailed hyperparameter setups, and hardware requirements for the experiments presented in this paper are elaborated in Appendix~\ref{apx:setup}.
    \item[] Guidelines:
    \begin{itemize}
        \item The answer NA means that the paper does not include experiments.
        \item The experimental setting should be presented in the core of the paper to a level of detail that is necessary to appreciate the results and make sense of them.
        \item The full details can be provided either with the code, in appendix, or as supplemental material.
    \end{itemize}

\item {\bf Experiment Statistical Significance}
    \item[] Question: Does the paper report error bars suitably and correctly defined or other appropriate information about the statistical significance of the experiments?
    \item[] Answer: \answerYes{} 
    \item[] Justification: The evaluation curves in Figs.~\ref{fig:mujoco_main},~\ref{fig:isaac_curve},~\ref{fig:mujoco_ablation},~\ref{fig:mujoco_stochastic},~\ref{fig:mujoco_affine}, and \ref{fig:mujoco_param}) present the mean performance, with the shaded areas indicating the 95\% confidence intervals. Each of them is derived from five independent runs with different seeds.
    \item[] Guidelines:
    \begin{itemize}
        \item The answer NA means that the paper does not include experiments.
        \item The authors should answer "Yes" if the results are accompanied by error bars, confidence intervals, or statistical significance tests, at least for the experiments that support the main claims of the paper.
        \item The factors of variability that the error bars are capturing should be clearly stated (for example, train/test split, initialization, random drawing of some parameter, or overall run with given experimental conditions).
        \item The method for calculating the error bars should be explained (closed form formula, call to a library function, bootstrap, etc.)
        \item The assumptions made should be given (e.g., Normally distributed errors).
        \item It should be clear whether the error bar is the standard deviation or the standard error of the mean.
        \item It is OK to report 1-sigma error bars, but one should state it. The authors should preferably report a 2-sigma error bar than state that they have a 96\% CI, if the hypothesis of Normality of errors is not verified.
        \item For asymmetric distributions, the authors should be careful not to show in tables or figures symmetric error bars that would yield results that are out of range (e.g. negative error rates).
        \item If error bars are reported in tables or plots, The authors should explain in the text how they were calculated and reference the corresponding figures or tables in the text.
    \end{itemize}

\item {\bf Experiments Compute Resources}
    \item[] Question: For each experiment, does the paper provide sufficient information on the computer resources (type of compute workers, memory, time of execution) needed to reproduce the experiments?
    \item[] Answer: \answerYes{} 
    \item[] Justification: The hardware requirements (e.g., the computational hardware configurations and the execution time) for each experiment are elaborated in Appendix~\ref{apx:setup}.
    \item[] Guidelines:
    \begin{itemize}
        \item The answer NA means that the paper does not include experiments.
        \item The paper should indicate the type of compute workers CPU or GPU, internal cluster, or cloud provider, including relevant memory and storage.
        \item The paper should provide the amount of compute required for each of the individual experimental runs as well as estimate the total compute. 
        \item The paper should disclose whether the full research project required more compute than the experiments reported in the paper (e.g., preliminary or failed experiments that didn't make it into the paper). 
    \end{itemize}
    
\item {\bf Code Of Ethics}
    \item[] Question: Does the research conducted in the paper conform, in every respect, with the NeurIPS Code of Ethics \url{https://neurips.cc/public/EthicsGuidelines}?
    \item[] Answer: \answerYes{} 
    \item[] Justification: All authors have reviewed the NeurIPS Code of Ethics and confirmed that the research conducted in this paper complies with it.
    \item[] Guidelines:
    \begin{itemize}
        \item The answer NA means that the authors have not reviewed the NeurIPS Code of Ethics.
        \item If the authors answer No, they should explain the special circumstances that require a deviation from the Code of Ethics.
        \item The authors should make sure to preserve anonymity (e.g., if there is a special consideration due to laws or regulations in their jurisdiction).
    \end{itemize}

\item {\bf Broader Impacts}
    \item[] Question: Does the paper discuss both potential positive societal impacts and negative societal impacts of the work performed?
    \item[] Answer: \answerYes{} 
    \item[] Justification: This paper discusses its potential impacts in Appendix~\ref{apx:impacts}.
    \item[] Guidelines:
    \begin{itemize}
        \item The answer NA means that there is no societal impact of the work performed.
        \item If the authors answer NA or No, they should explain why their work has no societal impact or why the paper does not address societal impact.
        \item Examples of negative societal impacts include potential malicious or unintended uses (e.g., disinformation, generating fake profiles, surveillance), fairness considerations (e.g., deployment of technologies that could make decisions that unfairly impact specific groups), privacy considerations, and security considerations.
        \item The conference expects that many papers will be foundational research and not tied to particular applications, let alone deployments. However, if there is a direct path to any negative applications, the authors should point it out. For example, it is legitimate to point out that an improvement in the quality of generative models could be used to generate deepfakes for disinformation. On the other hand, it is not needed to point out that a generic algorithm for optimizing neural networks could enable people to train models that generate Deepfakes faster.
        \item The authors should consider possible harms that could arise when the technology is being used as intended and functioning correctly, harms that could arise when the technology is being used as intended but gives incorrect results, and harms following from (intentional or unintentional) misuse of the technology.
        \item If there are negative societal impacts, the authors could also discuss possible mitigation strategies (e.g., gated release of models, providing defenses in addition to attacks, mechanisms for monitoring misuse, mechanisms to monitor how a system learns from feedback over time, improving the efficiency and accessibility of ML).
    \end{itemize}
    
\item {\bf Safeguards}
    \item[] Question: Does the paper describe safeguards that have been put in place for responsible release of data or models that have a high risk for misuse (e.g., pretrained language models, image generators, or scraped datasets)?
    \item[] Answer: \answerNA{} 
    \item[] Justification: The paper poses no risk for misuse (e.g., pretrained language models, image generators, or scraped datasets).
    \item[] Guidelines:
    \begin{itemize}
        \item The answer NA means that the paper poses no such risks.
        \item Released models that have a high risk for misuse or dual-use should be released with necessary safeguards to allow for controlled use of the model, for example by requiring that users adhere to usage guidelines or restrictions to access the model or implementing safety filters. 
        \item Datasets that have been scraped from the Internet could pose safety risks. The authors should describe how they avoided releasing unsafe images.
        \item We recognize that providing effective safeguards is challenging, and many papers do not require this, but we encourage authors to take this into account and make a best faith effort.
    \end{itemize}

\item {\bf Licenses for existing assets}
    \item[] Question: Are the creators or original owners of assets (e.g., code, data, models), used in the paper, properly credited and are the license and terms of use explicitly mentioned and properly respected?
    \item[] Answer: \answerYes{} 
    \item[] Justification: The creators of assets are properly credited through citations, and the licence is included in the asset (see Appendix~\ref{apx:setup}).
    \item[] Guidelines:
    \begin{itemize}
        \item The answer NA means that the paper does not use existing assets.
        \item The authors should cite the original paper that produced the code package or dataset.
        \item The authors should state which version of the asset is used and, if possible, include a URL.
        \item The name of the license (e.g., CC-BY 4.0) should be included for each asset.
        \item For scraped data from a particular source (e.g., website), the copyright and terms of service of that source should be provided.
        \item If assets are released, the license, copyright information, and terms of use in the package should be provided. For popular datasets, \url{paperswithcode.com/datasets} has curated licenses for some datasets. Their licensing guide can help determine the license of a dataset.
        \item For existing datasets that are re-packaged, both the original license and the license of the derived asset (if it has changed) should be provided.
        \item If this information is not available online, the authors are encouraged to reach out to the asset's creators.
    \end{itemize}

\item {\bf New Assets}
    \item[] Question: Are new assets introduced in the paper well documented and is the documentation provided alongside the assets?
    \item[] Answer: \answerYes{} 
    \item[] Justification: The code, installation instructions, and running commands are summarized in an anonymous repository, with the link provided in Appendix~\ref{apx:setup}. The environments are publicly available, and the experiments are performed on them with the default setup.
    \item[] Guidelines:
    \begin{itemize}
        \item The answer NA means that the paper does not release new assets.
        \item Researchers should communicate the details of the dataset/code/model as part of their submissions via structured templates. This includes details about training, license, limitations, etc. 
        \item The paper should discuss whether and how consent was obtained from people whose asset is used.
        \item At submission time, remember to anonymize your assets (if applicable). You can either create an anonymized URL or include an anonymized zip file.
    \end{itemize}

\item {\bf Crowdsourcing and Research with Human Subjects}
    \item[] Question: For crowdsourcing experiments and research with human subjects, does the paper include the full text of instructions given to participants and screenshots, if applicable, as well as details about compensation (if any)? 
    \item[] Answer: \answerNA{} 
    \item[] Justification: The paper does not involve crowdsourcing or research with human subjects.
    \item[] Guidelines:
    \begin{itemize}
        \item The answer NA means that the paper does not involve crowdsourcing nor research with human subjects.
        \item Including this information in the supplemental material is fine, but if the main contribution of the paper involves human subjects, then as much detail as possible should be included in the main paper. 
        \item According to the NeurIPS Code of Ethics, workers involved in data collection, curation, or other labor should be paid at least the minimum wage in the country of the data collector. 
    \end{itemize}

\item {\bf Institutional Review Board (IRB) Approvals or Equivalent for Research with Human Subjects}
    \item[] Question: Does the paper describe potential risks incurred by study participants, whether such risks were disclosed to the subjects, and whether Institutional Review Board (IRB) approvals (or an equivalent approval/review based on the requirements of your country or institution) were obtained?
    \item[] Answer: \answerNA{} 
    \item[] Justification: The paper does not involve research with human subjects.
    \item[] Guidelines:
    \begin{itemize}
        \item The answer NA means that the paper does not involve crowdsourcing nor research with human subjects.
        \item Depending on the country in which research is conducted, IRB approval (or equivalent) may be required for any human subjects research. If you obtained IRB approval, you should clearly state this in the paper. 
        \item We recognize that the procedures for this may vary significantly between institutions and locations, and we expect authors to adhere to the NeurIPS Code of Ethics and the guidelines for their institution. 
        \item For initial submissions, do not include any information that would break anonymity (if applicable), such as the institution conducting the review.
    \end{itemize}

\end{enumerate}

\end{document}